\theoremstyle{plain}
\newtheorem{theorem}{Theorem}[]  
\newtheorem{lemma}[]{Lemma}      
\theoremstyle{definition}
\theoremstyle{remark}
\title{Reviving Any-Subset Autoregressive Models with Principled Parallel Sampling and Speculative Decoding}
\newcommand{\asarm}{AS-ARMs\xspace}
\newcommand{\assd}{ASSD\xspace}
\author{%
Gabe Guo \\
Department of Computer Science \\
Stanford University \\
\texttt{gabeguo@stanford.edu}
\And
Stefano Ermon \\
Department of Computer Science \\
Stanford University \\
\texttt{ermon@cs.stanford.edu}
}
\begin{document}

\maketitle

\begin{abstract}
In arbitrary-order language models, it is an open question how to sample tokens in parallel from the \textit{correct joint distribution}.
With discrete diffusion models, the more tokens they generate in parallel, the less their predicted distributions adhere to the originally learned data distribution, as they rely on a conditional independence assumption that only works with infinitesimally small timesteps. 
We find that a different class of models, any-subset autoregressive models (\asarm), holds the solution. As implied by the name, \asarm can generate tokens in any order, and in parallel. Moreover, \asarm support parallelized joint probability density estimation, allowing them to correct their own parallel-generated token distributions, via our \textit{Any-Subset Speculative Decoding (\assd)} algorithm. \assd provably enables generation of tokens from the correct joint distribution, with the number of neural network calls upper bounded by the number of tokens predicted.
We empirically verify that \assd speeds up language generation, without sacrificing quality. Furthermore, we provide a mathematically justified scheme for training \asarm for generation, and show that \asarm achieve state-of-the-art performance among sub-200M parameter models on infilling benchmark tasks, and nearly match the performance of models 50X larger on code generation. 
Our theoretical and empirical results indicate that the once-forgotten \asarm are a promising direction of language modeling.
\end{abstract}

\section{Introduction}

Almost all the SoTA LLMs \cite{achiam2023gpt, touvron2023llama, liu2024deepseek, team2023gemini} are autoregressive, \textit{i.e.}, they only support left-to-right token generation. As a result, they suffer from two major problems: (1) they must generate tokens one-by-one, which limits their speed; (2) they cannot infill sequences in orders besides left-to-right (unless specialized training strategies are adopted \cite{bavarian2022efficient, fried2022incoder, roziere2023code}, but these are heuristic and not guaranteed to output the correct structure). 

Regarding models that inherently support infilling, there are discrete diffusion models \cite{austin2021structured, campbell2022continuous, lou2023discrete, sahoo2024simple} and any-order autoregressive models (AO-ARMs) \cite{yang2019xlnet, shih2022training}. Discrete diffusion models have the benefit of parallel sampling of multiple tokens at a time, which potentially speeds up generation, but at the cost of fidelity to the learned data distribution \cite{lou2023discrete}. On the other hand, fast sampling schemes have generally not been explored for AO-ARMs. 

We explore the problem of fast parallel sampling from AO-ARMs, without any degradation in output quality. Particularly, we are inspired by speculative decoding techniques, which have accelerated generation in standard autoregressive models by using a draft model to quickly generate multiple tokens without any degradation. Then, the tokens are accepted or rejected based on their fidelity to the joint distribution as evaluated by the oracle model. Interestingly, the outputted distribution is provably the same as would have been obtained from sampling only from the expensive oracle model, while usually requiring fewer neural network forward passes \cite{chen2023accelerating, leviathan2023fast}. 
Crucial to speculative decoding are (1) density estimation from the oracle, and (2) a fast draft model. 
Indeed, AO-ARMs, by design, estimate joint probability density of sequences \cite{yang2019xlnet}.
Furthermore, AO-ARMs can even act as their own draft models; due to their architectural design and training objective, they can generate tokens in any order and in parallel \cite{yang2019xlnet, shih2022training}. 

As such, we propose \textit{Any-Subset Speculative Decoding (\assd)}, an algorithm that combines speculative decoding with AO-ARMs. \assd provably generates sequences from the true joint distribution learned by the oracle model. Furthermore, it is mathematically guaranteed to never increase the number of neural network evaluations, which means that it can only speed up generation without losing quality (as we empirically verify).
We also provide a mathematically justified training scheme for AO-ARMs. Finally, we show that appropriately trained AO-ARMs achieve state-of-the-art performance among sub-200M parameter models (diffusion and autoregressive) on infilling benchmark tasks, and nearly match the performance of models 50X larger on code generation, while needing less than half the training tokens. \footnote{\textcolor{blue}{Models available at \href{https://huggingface.co/therealgabeguo/ASARM}{https://huggingface.co/therealgabeguo/ASARM}. Code available at \href{https://github.com/gabeguo/any-order-speculative-decoding}{https://github.com/gabeguo/any-order-speculative-decoding}.}}

\section{Background}


\subsection{Autoregressive Models}

Given a text sequence $\mathbf{x} \sim \mathcal{D}$, where $\mathcal{D}$ is a data distribution, autoregressive (AR) models learn
\begin{equation}\label{eqn:ar_objective}
    p(\mathbf{x}) = \prod_{i = 0}^{|\mathbf{x}|-1}p(x_{i} | x_{0}, \ldots, x_{i-1}).
\end{equation}
Crucially, the product rule factorization means that an AR model only needs to learn conditional distributions with support of size $O(S)$, where $S$ is the size of the vocabulary. This factorization also admits a straightforward generation strategy, where token $x_i$ is sampled from $p(\cdot | x_0, \ldots, x_{i-1})$, \textit{i.e.}, the previous tokens are used to produce the next \cite{achiam2023gpt}.
This also means that the prompt for an AR model must always be the prefix, \textit{i.e.}, $\mathbf{x}_{0:i}$ -- arbitrarily-located prompts are not supported.

\subsection{Infilling}

We consider infilling tasks, where the prompt is not necessarily the prefix, but can be arbitrarily-located. Examples of this are code generation, story completion, and scientific data imputation.
Mathematically, this can be formulated as sampling from a joint conditional probability distribution with discrete state space. That is, we wish to sample from $p(\mathbf{x}_{\sigma(\geq m)} | \mathbf{x}_{\sigma(<m)})$, where $\sigma(i)$ is the (zero-indexed) positional index of the $i$-th ordered item in the sequence of length $N$. In other words, $\sigma$ is a permutation of $\{0, 1, \ldots, N - 1\}$, and $i$ is the generation order. So, $\mathbf{x}_{\sigma(<m)}$ represents the prompt tokens, and $\mathbf{x}_{\sigma(\geq m)}$ represents the tokens whose distributions we want to predict. In general, we can have any $\sigma: \{0, \ldots, N-1\} \rightarrow \{0, \ldots, N-1\}$, so long as $\sigma$ is a bijection. 
As previously established, regular AR models cannot address this task in general, except when $\sigma(i) = i$.


\subsection{Any-Order Autoregressive Models}\label{subsec:token_by_token}


Any-order autoregressive models (AO-ARMs) \cite{shih2022training, hoogeboom2021autoregressive, yang2019xlnet} can be seen as collections of $N!$ joint distributions indexed by $\sigma$, where $\sigma$ is the factorization order:
\begin{equation}\label{eqn:factorization}
    \begin{split}
        \text{log }p(\mathbf{x}_{\sigma(\geq m)} | \mathbf{x}_{\sigma(<m)}) 
        &= \sum_{i=m}^{N-1} \text{log } p(x_{\sigma(i)} | \mathbf{x}_{\sigma(<i)};\sigma).
    \end{split}
\end{equation}
That is, the distribution for each token is calculated one at a time, in order of increasing $i$, and used as conditioning for the next token to be predicted. Concretely, each evaluation of the network produces a probability distribution (for a single token) with support of size $O(S)$, where $S$ is the size of the vocabulary. The probability of each member of the support is explicitly calculated and stored in memory, incurring $O(S)$ memory cost per token.

If trained to optimality, all the joint distributions should be equal. However, except with infinite data and capacity, given different ordering functions $\alpha$ and $\sigma$ \cite{shih2022training},
\begin{equation}\label{eqn:inconsistent}
    \sum_{i=0}^{N-1} \text{log } p(x_{\sigma(i)} | \mathbf{x}_{\sigma(<i)};\sigma) 
    \neq \sum_{i=0}^{N-1} \text{log } p(x_{\alpha(i)} | \mathbf{x}_{\alpha(<i)};\alpha).
\end{equation}

\subsection{Any-Subset Autoregressive Models: Disambiguating Joint Conditional Calculation}\label{subsec:efficient_mask}

Any-subset autoregressive models (\asarm) are a subclass of AO-ARMs that reduce the number of joint distributions ($\sigma$) learned from $N!$ to $2^N$. This puts less training burden on the finite model capacity, while keeping strictly the same expressivity as it relates to conditional joint distributions of the form in Equation \ref{eqn:factorization}.
To achieve this, \asarm adopt the recursive binary lattice mask decomposition protocol from \cite{shih2022training}. The idea underlying this protocol is that we can split every generation task into two parts: the prompt (denoted by $\mathbf{x}_{\sigma(<m)}$) and the tokens that need to be generated (denoted by $\mathbf{x}_{\sigma(\geq m)}$). Mathematically, we want to estimate $p(\mathbf{x}_{\sigma(\geq m)} | \mathbf{x}_{\sigma(<m)})$, where $m$ is the number of tokens in the prompt (assuming 0-indexing).
Now, we make two observations. 

Firstly, since we never need to evaluate the density of the conditioning $\mathbf{x}_{\sigma(<m)}$, we have every token attend to every other token within $\mathbf{x}_{\sigma(<m)}$. 
Secondly, within $\mathbf{x}_{\sigma(\geq m)}$, there is no need to learn all the possible factorization paths, as long as we can get the joint conditional probability $p(\mathbf{x}_{\sigma(\geq m)} | \mathbf{x}_{\sigma(<m)})$. Taking inspiration from vanilla autoregressive models, we enforce
\begin{equation}\label{eqn:left_to_right_forcing}
    \forall \ i \geq m, j \geq m: \sigma(i) > \sigma(j) \Leftrightarrow i > j.
\end{equation}
That is, we simply process the masked tokens left to right. Thus, given the location of the prompt, we only have to learn one path to calculate the joint probability of the generation. This also solves the inconsistency problem in Equation \ref{eqn:inconsistent}. 
(As later shown, this is crucial to Algorithm \ref{alg:any_order_speculative}'s correctness.)


This reduces the number of queries learned by the model from $N!$ (all possible permutations) to $2^N$ (all possible mask location selections, times one ordering per mask selection). This makes optimization easier, as noted by \cite{shih2022training} and verified in our ablations (Figure \ref{fig:fixed_vs_any_order}). 
In summary, \asarm are a subclass of AO-ARMs incorporating Equation \ref{eqn:left_to_right_forcing}'s disambiguated ordering strategy. The architecture design is typically the same, but the way we query the architecture is different.


\section{Sampling Strategies for Joint Distributions}\label{sec:sampling_strat}



We want to accurately and efficiently sample from the joint conditional distribution in Equation \ref{eqn:factorization}. We assume that we have access to \asarm that explicitly predict single-variable marginals, \textit{i.e.}, next-token prediction under some ordering.

\noindent\textbf{{One-Step Sampling from the Joint:}}
Sampling directly from this joint distribution in a single step is typically infeasible, because its support has size $O(S^{N - m})$, where $S$ is the number of tokens in the vocabulary. If we wanted to explicitly calculate the probability of every tuple in the support, the exponential space cost to just store the distribution would quickly exceed memory capacities.



\noindent\textbf{{Sequential Sampling via Factorization:}}
We can sample the $\text{log } p(x_{\sigma(i)} | \mathbf{x}_{\sigma(<i)})$ one-by-one, using each generated token as the conditioning for the next one. Following Equation \ref{eqn:factorization}, we can get samples from the joint conditional distribution by doing this $N-m$ times, once for each $i \in [m, N)$. In (any-order, any-subset) autoregressive models \cite{achiam2023gpt, touvron2023llama, yang2019xlnet, shih2022training}, there is typically $O(S)$ time cost per-token, so the overall time cost would be $O(S * (N-m))$. 
Indeed, this is the dominant generation strategy for autoregressive models like GPT \cite{achiam2023gpt}, where $\sigma(i) = i$. 



\noindent\textbf{{Parallel Sampling via Independence Assumption:}}
At another extreme, we can independently sample multiple single-variable marginals in parallel. That is, we predict $\text{log } p(x_{\sigma(i)} | \mathbf{x}_{\sigma(<m)})$ for all $i \in [m, N)$. Since $N - m$ generations are done in parallel, computational time cost is only $O(S)$, which is effectively constant with respect to the number of tokens. The limitation is that 
\begin{equation}\label{eqn:bad_parallel}
    \sum_{i \in [m, N)} \text{log } p(x_{\sigma(i)} | \mathbf{x}_{\sigma(<m)}) \neq \text{log }p(\mathbf{x}_{\sigma(\geq m)} | \mathbf{x}_{\sigma(<m)}),
\end{equation}
except in the unlikely scenario of true independence, \textit{i.e.}, $\text{log } p(x_{\sigma(i)} | \mathbf{x}_{\sigma(<m)}) = \text{log } p(x_{\sigma(i)} | \mathbf{x}_{\sigma(<i)})$.


This is analogous to how discrete diffusion models take large discretized timesteps in the reverse CTMC to predict tokens in parallel, even though the predictions at each position are actually generated with a conditional independence assumption that only holds for an infinitesimally small timestep (\textit{i.e.}, one-by-one generation) \cite{lou2023discrete, sahoo2024simple}.


\noindent\textbf{{Best of Both Worlds: Any-Subset Speculative Decoding:}} 
We seek a way to combine the runtime benefits of parallel sampling with the fidelity of sequential sampling. That is, can we achieve $O(S)$ time complexity for arbitrary-subset generation, while recovering true samples from $\text{log }p(\mathbf{x}_{\sigma(\geq m)} | \mathbf{x}_{\sigma(<m)})$?
The key insight here is that if we had an oracle model that could evaluate the joint density of a sequence with a singular function evaluation, we could leverage the quick speed of independent parallel generation and use these samples $\text{log } p(x_{\sigma(i)} | \mathbf{x}_{\sigma(<m)})$ as estimates for the true $\text{log } p(x_{\sigma(i)} | \mathbf{x}_{\sigma(<i)})$. Then, via some rejection sampling scheme which uses as an oracle the joint density evaluation of this newly generated sequence, we could keep only the samples that adhere to $\text{log }p(\mathbf{x}_{\sigma(\geq m)} | \mathbf{x}_{\sigma(<m)})$. In principle, this could take best-case $o(S)$ time (parallelized), 
if the oracle accepts all the samples, and worst-case $O(S * (N - m))$ time.

Speculative decoding is one such method that guarantees fidelity to the true distribution. It is mathematically proven to generate samples from the target distribution, and the empirical results show a stark decrease in the number of function evaluations required \cite{leviathan2023fast, chen2023accelerating}. However, prior to our work, speculative decoding has only been shown to work for traditional GPT-style \cite{achiam2023gpt} autoregressive models.

\section{Architectural Design of \asarm}

Towards the goal of fast, principled parallel sampling in any order, we seek any-subset autoregressive models (\asarm) that can support our \textit{Any-Subset Speculative Decoding (\assd)} algorithm (which is fully described in Section \ref{sec:any_order_speculative_decoding_algo_and_proof}). To recap, the criteria are: (1) generates arbitrarily-ordered tokens in parallel; (2) evaluates joint density with only one forward pass. We now show how \asarm can be designed to fulfill these criteria. In this section, we will focus on AO-ARM architecture, specifically XLNet \cite{yang2019xlnet}, as \asarm are architecturally the same as AO-ARMs.


\subsection{Parallel Sampling via Arbitrary Positional Queries}\label{subsec:parallel_sampling_xlnet}

Firstly, the architectures of modern weight-tied AO-ARMs \cite{yang2019xlnet, shih2022training} support parallel sampling. That is, with one function evaluation, we can simultaneously predict in parallel (conditionally independent) distributions for all the masked tokens $\mathbf{x}_{\sigma(\geq m)}$, conditioned on the prompt $\mathbf{x}_{\sigma(< m)}$. This allows the network to act as a quick "draft" model.

To illustrate how this works, we consider the XLNet architecture \cite{yang2019xlnet}. It is designed such that we can pass in arbitrary positional queries $\sigma(\geq m)$ of not-yet-predicted tokens to condition on the visible prompt tokens $\mathbf{x}_{\sigma(< m)}$. As implied by the "any-order" moniker, there is no constraint on which positions we query: we could query the leftmost, rightmost, or even a randomly selected unfilled position. 
Furthermore, the positional queries all attend to the same prompt tokens, but the prompt tokens cannot attend to the positional queries. So, no matter how many positions we query in parallel, each query cannot change the representations of the prompt tokens, and therefore cannot change the outcomes of the other simultaneous queries. This enables parallel sampling. 
See Figure \ref{subfig:parallel_sample_attention}. 

\subsection{Quick and Principled Density Estimation via Causal-Like Attention Masking}\label{subsec:ao_arm_density}

\begin{figure}
    \centering
    \begin{subfigure}[]{0.4\textwidth}
        \includegraphics[width=\linewidth]{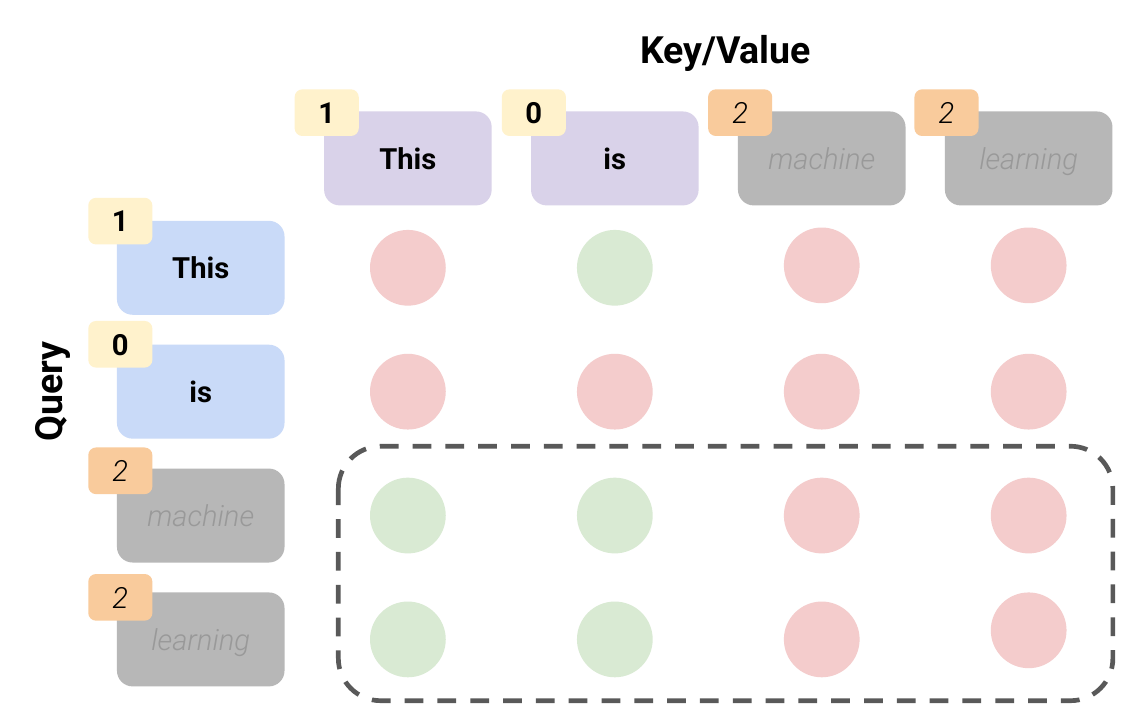}
        \caption{\textbf{Parallel Sampling Attention Mask}}\label{subfig:parallel_sample_attention}
    \end{subfigure}
    \begin{subfigure}[]{0.4\textwidth}
        \includegraphics[width=\linewidth]{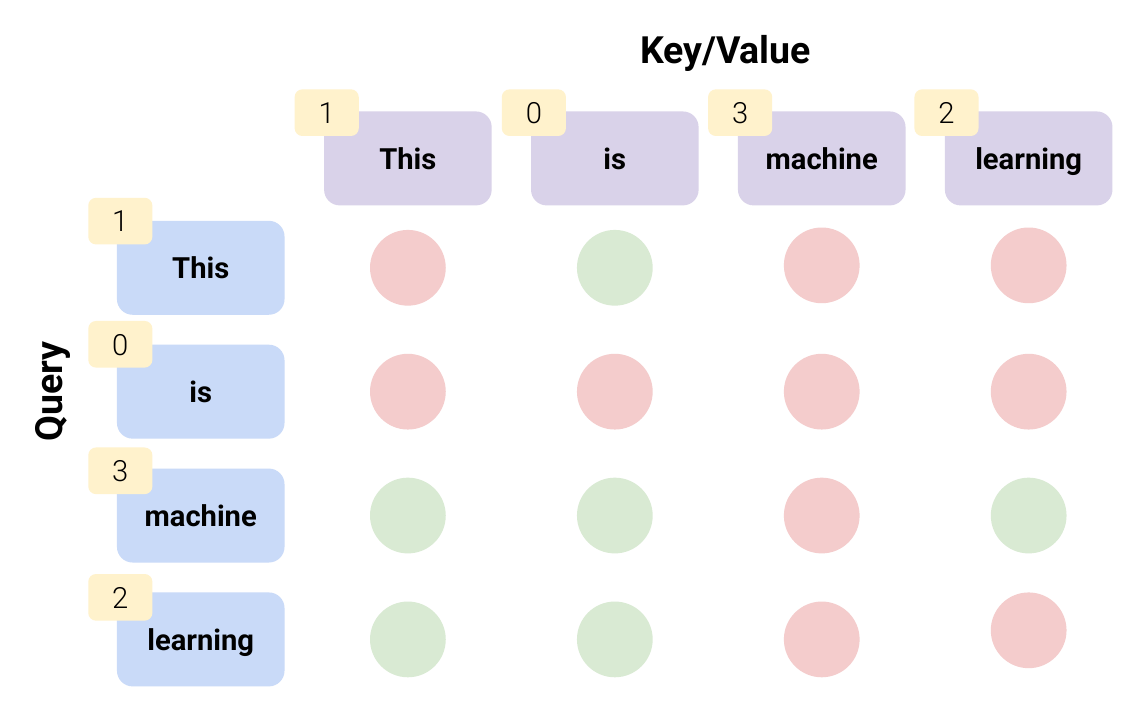}
        \caption{\textbf{Density Estimation Attention Mask}}\label{subfig:density_estimation_attention}
    \end{subfigure}
    \caption{\textbf{Attention Masks in \asarm:} Red means that attention is not allowed, while green means that attention is allowed from the query to the key/value. The numbers on each token represent the generation order, where lower-numbered tokens come first. Panel \ref{subfig:parallel_sample_attention} shows how, by attending to the same conditioning ("This", "is"), the tokens "machine" and "learning" can be generated in parallel. They do not attend to themselves nor each other. Panel \ref{subfig:density_estimation_attention} shows how we can conduct one-step density estimation on a sequence, with a permuted causal-like attention mask. The mask enforces the order "is", "This", "learning", "machine", where each token only attends to those decoded before it.
    }
    \label{fig:attention_mask}
\end{figure}

Another crucial ingredient of speculative decoding is density estimation -- this allows the "oracle" model to correct the mistakes of the draft model. As such, discrete diffusion models trained with an ELBO \cite{lou2023discrete, sahoo2024simple, deschenaux2024beyond} are not readily adaptable to this scheme. 
However, AO-ARMs \cite{shih2022training, yang2019xlnet}, just like discrete diffusion models, can generate an arbitrary number of tokens in parallel in $O(S)$ time. Furthermore, they are designed to evaluate the true joint density of a sequence, something discrete diffusion models do not currently do. 

Care must be taken, however, to pick an architecture that evaluates the joint density of a sequence in one function evaluation, \textit{i.e.}, $O(S)$ time. Recent architectures \cite{shih2022training, hoogeboom2021autoregressive} take $O(S * N)$ steps, as only logits of masked tokens are predicted at each function evaluation -- they are unable to predict the logits of visible tokens. 



However, XLNet \cite{yang2019xlnet} can calculate the logits for all visible and masked tokens in a single function evaluation with $O(S)$ time. In particular, the attention masks for each token only allow attention (\textit{i.e.}, conditioning) to the preceding tokens in the ordering, such that we can construct a factorization as in Equation \ref{eqn:factorization} \cite{radford2019language}. 
Then, we process the tokens in parallel with this attention mask, thereby giving us the desired $O(S)$ time. Mathematically,

\begin{equation}\label{eqn:attn}
    A_{\sigma(i), \sigma(j)} = \begin{cases}
        0 & i \leq j \\
        1 & i > j
    \end{cases},
\end{equation}
where $A_{\sigma(i), \sigma(j)}$ is the masking matrix for the attention maps -- $0$ means that the query token at index $\sigma(i)$ is \textit{not} allowed to attend to the key/value token at index $\sigma(j)$; $1$ means that token $\sigma(i)$ \textit{can} attend to token $\sigma(j)$. Such an attention mask can yield faithful estimates of $\text{log } p(\mathbf{x}_{\sigma(\geq i)} | \mathbf{x}_{\sigma(<i)})$.
See Figure \ref{subfig:density_estimation_attention}, and Appendix \ref{sec:extra_causal_attention_masking} for further discussion.


\subsection{Two-for-One Model: Streamlined Drafting}

Ideally, we do not want to train a separate draft model, because it takes extra memory on the hardware and additional training cost. Furthermore, the computations from the separate draft model cannot necessarily be re-used for the oracle. 
If the draft model was the same as the target/oracle model, we would not need to incur an extra memory nor training cost, and could cache the computations from the draft model to accelerate the target calculations. 

Luckily, XLNet \cite{yang2019xlnet} allows us to predict multiple tokens independently in parallel; we can use this conditionally independent guess (as in Equation \ref{eqn:bad_parallel}) for the draft, like described in Section \ref{subsec:parallel_sampling_xlnet}. We can feed this draft back through the same model (with computations for the already-visible tokens cached) to calculate the oracle density estimates, as in Section \ref{subsec:ao_arm_density}. The draft tokens would then be accepted or rejected based on their fidelity to the oracle density, as described in the next section.

\section{How to Modify Speculative Decoding for \asarm?}\label{sec:any_order_speculative_decoding_algo_and_proof}

Now that we have established that there are suitable \asarm architectures, we present \textit{Any-Subset Speculative Decoding (\assd)}, and prove some of its properties. See Algorithm \ref{alg:any_order_speculative}. See Appendix \ref{sec:proofs} for proofs. In the resampling step, we use the notation $\left(f(x)\right)_{+} = \frac{\text{max}(0, f(x))}{\sum_{x}\text{max}(0, f(x))}$.


\begin{lemma}\label{lem:first_token_accept}
The first token speculated in each loop iteration will always be accepted. That is, Line \ref{line:accept}'s conditional always evaluates to true when $i = n$.
\end{lemma}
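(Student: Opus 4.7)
The plan is to show that for the first speculated position $\sigma(n)$ in any iteration, the draft distribution and the oracle distribution coincide exactly, so the standard speculative-decoding acceptance test is trivially satisfied.

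First I would pin down what the two distributions are at position $i=n$. By the parallel-sampling construction in Section \ref{subsec:parallel_sampling_xlnet}, all drafted tokens at positions $\sigma(n), \sigma(n+1),\ldots$ are sampled independently conditioned only on the already-confirmed tokens $\mathbf{x}_{\sigma(<n)}$. In particular, the draft distribution at position $\sigma(n)$ is $q(x_{\sigma(n)}) = p(x_{\sigma(n)} \mid \mathbf{x}_{\sigma(<n)})$. On the oracle side, the causal-like attention mask of Equation \ref{eqn:attn} together with the left-to-right disambiguation enforced in Equation \ref{eqn:left_to_right_forcing} means that the oracle's conditional at factorization index $n$ attends only to tokens at indices $< n$ in the ordering, i.e., exactly the same set $\mathbf{x}_{\sigma(<n)}$. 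Hence the oracle distribution at the first speculated slot is also $p(x_{\sigma(n)} \mid \mathbf{x}_{\sigma(<n)})$.

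Next I would plug this equality into the speculative-decoding acceptance rule on Line \ref{line:accept}. The standard test accepts a drafted token $x$ with probability $\min\bigl(1, p_{\text{oracle}}(x)/q_{\text{draft}}(x)\bigr)$ (or equivalently compares $u \sim \mathrm{Unif}[0,1]$ against this ratio). Since the two distributions are literally the same function at $i=n$, the ratio is $1$ for every value of $x_{\sigma(n)}$, so the conditional on Line \ref{line:accept} holds unconditionally. Thus the first drafted token is always accepted.

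The only real subtlety, and what I'd treat as the main point to get right, is making sure that the "conditioning set" really is identical on both sides. The draft step conditions on the current confirmed prefix (prompt plus previously accepted tokens), and the oracle's one-pass density evaluation, by construction of the mask in Figure \ref{subfig:density_estimation_attention}, conditions the $n$-th factor on exactly the tokens at ordering positions $0,\ldots,n-1$. The left-to-right forcing of Equation \ref{eqn:left_to_right_forcing} is essential here, because it guarantees that "previously accepted" in the draft sense and "earlier in the factorization order" in the oracle sense refer to the same token set; without it, the oracle could in principle factorize through a different permutation and break the equality. Once that alignment is explicit, the lemma follows immediately from $\min(1,1)=1$.
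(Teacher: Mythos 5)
Your proof is correct and follows essentially the same route as the paper: the oracle's conditioning set at $i=n$ is just $\mathbf{x}_{\sigma(<n)}$ (the extra block $\mathbf{\tilde{x}}_{\sigma[n:n)}$ is empty), so $q_{\sigma(n)} = p_{\sigma(n)}$ and the acceptance ratio is $1$. The paper states this purely as the degenerate empty-conditioning case of Line \ref{line:gt_density}, while you additionally ground it in the attention-mask construction, but the argument is the same.
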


\begin{theorem}\label{lem:num_evals}
Algorithm \ref{alg:any_order_speculative} requires no more than $N - m$ total function evaluations of $p(\cdot | \cdot)$. That is, there will never be more calls to a neural network than the number of tokens returned on Line \ref{line:return}.
\end{theorem}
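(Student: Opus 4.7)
The plan is an amortized per-iteration accounting: bound the function evaluations of $p(\cdot\mid\cdot)$ in each loop iteration by the number of tokens it commits to the output, then sum over iterations to obtain the global bound $N-m$.

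First I would analyze the cost of a single iteration. Following the two-for-one model from Section 4.3, each iteration consists of a drafting pass, which produces conditionally independent candidate tokens in parallel across all unresolved positions, followed by an oracle verification pass that computes the true factorised densities using the causal-like attention mask of Equation~\ref{eqn:attn}. Together these contribute at most two evaluations of $p(\cdot\mid\cdot)$ per iteration, with representations of already-visible tokens cached between them so that caching shifts cost within an evaluation without adding new ones.

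Next I would count the tokens committed per iteration. Let $i^{\ast}$ denote the index of the first rejected draft in the iteration, if any. By Lemma~\ref{lem:first_token_accept} the first speculated token is always accepted, so either $i^{\ast} > m$ and the iteration commits $i^{\ast} - m \geq 1$ accepted drafts together with one token resampled from the corrected distribution $\left(p_{\mathrm{oracle}} - p_{\mathrm{draft}}\right)_{+}$, yielding at least two committed tokens, or else no rejection occurs and the iteration commits all remaining unresolved tokens before terminating the loop.

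Combining these, every non-terminating iteration spends at most $2$ evaluations to commit at least $2$ tokens, so the per-iteration inequality goes through. The terminating iteration either also commits $\geq 2$ tokens and is handled identically, or it starts with a single unresolved position, in which case the drafting pass already samples from the true conditional $p(x_{\sigma(N-1)}\mid \mathbf{x}_{\sigma(<N-1)})$, verification is vacuous, and one evaluation commits one token. Summing across iterations gives total evaluations $\leq N - m$, equal to the number of tokens returned on Line~\ref{line:return}. I expect the main obstacle to be pinning down this single-remaining-token edge case so that the amortization carries no off-by-one slack; the caching argument in Section 4.3 is the hinge that keeps each iteration's evaluation count at $\leq 2$, and without it the per-iteration comparison would fail.
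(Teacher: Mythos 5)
Your proposal is correct and takes essentially the same route as the paper's proof: two network evaluations per while-loop iteration, at least two tokens committed per non-terminal iteration (Lemma~\ref{lem:first_token_accept} guarantees the first draft is accepted, and a rejection's resample reuses the already-computed draft and oracle distributions), with the single-remaining-token edge case handled by the unverified early accept. One caveat: the bound of two evaluations per iteration holds simply because only the speculation loop and the oracle loop call $p(\cdot\mid\cdot)$; caching affects wall-clock cost, not the evaluation count, so it is not the hinge of the argument as your closing remark suggests.
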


Based on Lemma \ref{lem:first_token_accept} and Theorem \ref{lem:num_evals} (see proof), we should always set $k > 2$ (where $k$ is the number of speculated tokens per call to the draft model). 

\begin{theorem}\label{thm:correct_dist}
Algorithm \ref{alg:any_order_speculative} produces samples from the true joint distribution $p(\mathbf{x}_{\sigma(\geq m)} | \mathbf{x}_{\sigma(<m)})$.
\end{theorem}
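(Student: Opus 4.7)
The plan is to reduce the claim to the standard speculative decoding correctness argument, but applied in the any-subset factorization framework established by Equation \ref{eqn:left_to_right_forcing}. Since the \asarm, equipped with the causal-like attention mask of Equation \ref{eqn:attn}, defines
\[
    p(\mathbf{x}_{\sigma(\geq m)} \mid \mathbf{x}_{\sigma(<m)}) = \prod_{i=m}^{N-1} p(x_{\sigma(i)} \mid \mathbf{x}_{\sigma(<i)}),
\]
it suffices to show that, at each step $i$, the token emitted by \assd is distributed as $p(x_{\sigma(i)} \mid \mathbf{x}_{\sigma(<i)})$ conditional on the already-committed tokens $\mathbf{x}_{\sigma(<i)}$. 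Chaining these one-step identities yields the theorem by telescoping.

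The key one-step lemma is the usual Leviathan--Chen identity. For a fixed batch beginning at position $n$, the draft distribution used for every $i \in [n, n+k)$ is the parallel marginal $q_i(\cdot) = p(\cdot \mid \mathbf{x}_{\sigma(<n)})$, while the target conditional (computed by the oracle pass with the permuted causal mask) is $p_i(\cdot) = p(\cdot \mid \mathbf{x}_{\sigma(<i)})$. I would verify, by writing out the probability of outputting $x$ at step $i$ as
\[
    q_i(x)\,\min\!\left(1, \tfrac{p_i(x)}{q_i(x)}\right) \;+\; (1-\alpha_i)\,\frac{\bigl(p_i(x)-q_i(x)\bigr)_{+}}{\sum_{x'}\bigl(p_i(x')-q_i(x')\bigr)_{+}},
\]
that this sum equals $p_i(x)$, where $\alpha_i = \sum_{x} \min(p_i(x), q_i(x))$ is the marginal acceptance probability. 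This is a short calculation that splits on the sign of $p_i(x) - q_i(x)$.

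Given this one-step identity, I would conduct a two-level induction. Inner induction: conditioned on $\mathbf{x}_{\sigma(<n)}$ being a valid sample from the appropriate prefix marginal, the algorithm walks through $i = n, n+1, \ldots$ and either (i) accepts every draft in the batch, in which case each committed token is drawn from $p_i$; (ii) rejects at some first index $i^\star$, in which case positions $n, \ldots, i^\star - 1$ were each accepted from $p_i$, position $i^\star$ is resampled from the residual distribution $\left(p_{i^\star}(\cdot) - q_{i^\star}(\cdot)\right)_+$ and hence also distributed as $p_{i^\star}$, and all subsequent drafts in this batch are discarded. Either way, the committed tokens constitute a valid prefix sample from $\prod p(x_{\sigma(i)} \mid \mathbf{x}_{\sigma(<i)})$. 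Outer induction: the next batch restarts with a new $n$ equal to the number of tokens now committed, and the argument repeats. Combined with the factorization at the start, the full returned sequence is distributed as $p(\mathbf{x}_{\sigma(\geq m)} \mid \mathbf{x}_{\sigma(<m)})$.

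The main obstacle is being careful about \emph{which} target distribution the oracle actually provides at the moment of acceptance, and this is where Equation \ref{eqn:left_to_right_forcing} is load-bearing. Because the \asarm enforces a single left-to-right ordering on the masked positions, the attention mask of Equation \ref{eqn:attn} guarantees that the logit at position $\sigma(i)$ in the verification pass is exactly $p(\cdot \mid \mathbf{x}_{\sigma(<i)})$ under the same factorization used to sample — not some other permutation's conditional, which by Equation \ref{eqn:inconsistent} would in general disagree. Without this unique factorization, the one-step identity would compare a target drawn from one joint to a residual drawn from another, and the telescoping would fail. I would therefore state explicitly, as a preliminary step before the induction, that the scored distributions used for acceptance and for residual resampling are precisely the conditionals appearing in the factorization, and cite Lemma \ref{lem:first_token_accept} to handle the boundary case $i = n$ (where $p_n = q_n$ makes acceptance automatic and trivially consistent with $p_n$).
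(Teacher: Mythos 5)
Your proposal is correct and follows essentially the same route as the paper: the paper likewise reduces Theorem \ref{thm:correct_dist} to the Leviathan--Chen rejection-sampling identity (reproduced in Appendix \ref{sec:modified_rejection_sampling}, with the roles of $p$ and $q$ swapped relative to your notation), chains it by induction over $i \in [m, N)$ under the unique factorization enforced by Equation \ref{eqn:left_to_right_forcing}, and invokes Lemma \ref{lem:first_token_accept} for the $i=n$ and final-token boundary cases. Your direct per-token induction also quietly absorbs the paper's remaining bookkeeping point (no extra token is sampled when a whole batch is accepted), so nothing is missing.
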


We also present a variant of \assd in Appendix \ref{subsec:speculative_decoding_experiment_details} with a context-derived n-gram as the draft model \cite{stewart2024n}. However, this variant does not fulfill Lemma \ref{lem:first_token_accept}.

\begin{algorithm}
\caption{Any-Subset Speculative Decoding}\label{alg:any_order_speculative}
\SetAlgoLined
\LinesNumbered
\KwIn{$k$: number parallel tokens; $N$: target sequence length; $\sigma$: mapping of decoding order to 0-based positional index; $p(\cdot | \cdot)$: any-order autoregressive model; $\mathbf{x}_{\sigma(<m)}$: prompt tokens}
\KwOut{$\mathbf{x}_{\sigma(\geq m)}$: the predicted tokens}

$n \leftarrow m$ // number of tokens we've already decoded \\
\While{n < N}{\label{line:main_loop}
    $t \leftarrow \text{min}(n+k, N)$ \\
    // speculate the next $t$ tokens \\
    \textbf{(Parallelized):} \For{$i \in [n:t)$}{\label{line:speculate_loop}
        $\tilde{x}_{\sigma(i)} \sim p(\cdot | \mathbf{x}_{\sigma(<n)})$ // sample from partially conditioned distribution \label{line:speculate_iter}\\
        $p_{\sigma(i)} \leftarrow p(\tilde{x}_{\sigma(i)} | \mathbf{x}_{\sigma(<n)})$ // get partially conditioned density\label{line:independent_density}
    }\label{line:end_speculate_loop}
    \If{n == T - 1}{\label{line:early_stop}
        $x_{\sigma(n)} \leftarrow \tilde{x}_{\sigma(n)}$ // accept the proposal \label{line:default_accept} \\
        \textbf{return} $\mathbf{x}_{\sigma(\geq m)}$
    }
    \textbf{(Parallelized):} \For{$i \in [n:t)$}{\label{line:oracle_loop}
        $q_{\sigma(i)} \leftarrow p(\tilde{x}_{\sigma(i)} | \mathbf{x}_{\sigma(<n)}, \mathbf{\tilde{x}}_{\sigma[n:i)})$ // get ground truth density\label{line:gt_density}
    }\label{line:end_oracle}
    // rejection sampling \\
    \For{$i \in [n: t)$}{\label{line:accept_reject_loop}
        $r \sim \mathcal{U}[0, 1]$ \\
        \eIf{$r < \text{min}(1, \frac{q_{\sigma(i)}}{p_{\sigma(i)}})$}{\label{line:accept}
            $x_{\sigma(i)} \leftarrow \tilde{x}_{\sigma(i)}$ // accept the proposal \label{line:accept_proposal_token}
        }{\label{line:reject}
            $x_{\sigma(i)} \sim \left(p(\cdot | \mathbf{x}_{\sigma(<n)}, \mathbf{\tilde{x}}_{\sigma[n:i)}) - p(\cdot | \mathbf{x}_{\sigma(<n)})\right)_{+}$ // resample \label{line:resample} \\
            \textbf{exit} from for loop
        }
    }\label{line:end_accept_reject}
    $n \leftarrow i + 1$ // update number of decoded tokens
}

\textbf{return} $\mathbf{x}_{\sigma(\geq m)}$ \label{line:return}

\end{algorithm}

\section{Training and Implementation of Any-Subset Autoregressive Model}

\subsection{Arbitrary Masking Architecture}

As expressed in Equation \ref{eqn:attn} and Section \ref{subsec:ao_arm_density}, we need a model that allows us to have "causal"-like attention masking in arbitrary orders, such that we can calculate the factorized joint probability in parallel. 
To our knowledge, XLNet \cite{yang2019xlnet} is the only architecture that fulfills this criteria. Thus, we use the 110M parameter case-sensitive version of XLNet from Huggingface \cite{wolf2019huggingface}.
Hyperparameters and datasets are in Appendix \ref{sec:additional_experimental_details}.

\subsection{Teacher-Forced Joint Loss}

\begin{figure}[h]
    \centering
    \includegraphics[width=0.65\linewidth]{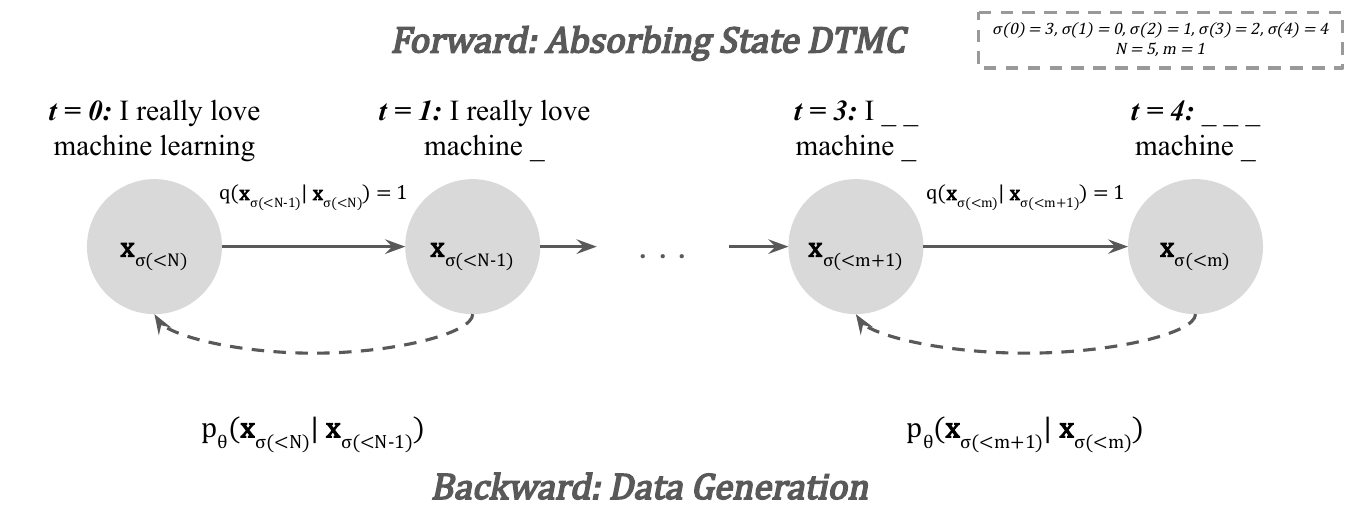}
    \caption{\textbf{Probabilistic Graphical Model:} Shows the discrete-time Markov chain for the forward noising process, and its time reversal (\textit{i.e.}, data generation). This justifies Equation \ref{eqn:loss}. 
    }
    \label{fig:probabilistic_graphical_model}
\end{figure}

The original XLNet \cite{yang2019xlnet} was only trained to predict $85$ masked tokens in a sequence length of $512$, corresponding to less than $20\%$ of the sequence, which is not ideal for generative modeling tasks \cite{yang2019xlnet}. We wish to have a model that can predict tokens almost from scratch, so we finetune the off-the-shelf XLNet model.
For our finetuning objective, we maximize the joint conditional probability in Equation \ref{eqn:factorization} with cross-entropy loss:
\begin{equation}\label{eqn:loss}
    \text{max}_\theta \ \mathbb{E}_{m \sim f(\cdot), \sigma \sim s(\cdot |m)}\left[\text{log }p_{\theta}(\mathbf{x}_{\sigma(\geq m)} | \mathbf{x}_{\sigma(<m)})\right],
\end{equation}
where $f(\cdot)$ is a distribution over integers from $[0, N)$ (\textit{i.e.}, sampling prompt length $m$), and $s(\cdot | m)$ is a distribution of permutations of integers from $[0, N)$ conditioned on prompt length $m$ (\textit{i.e.}, sampling token ordering $\sigma$), where $N$ is the sequence length.
The loss has three major components: (1) joint conditional distribution; (2) expectation over token orderings; (3) expectation over prompt lengths. 

\noindent\textbf{Joint Conditional Objective:}
To justify the joint conditional distribution $\text{log }p(\mathbf{x}_{\sigma(\geq m)} | \mathbf{x}_{\sigma(<m)})$, assume $m$ and $\sigma$ are fixed. We define a discrete time (absorbing state) Markov chain $\mathbf{x}, \mathbf{x}_{\sigma(<N-1)}, \mathbf{x}_{\sigma(<N-2)}, \ldots, \mathbf{x}_{\sigma(<m+1)}, \mathbf{x}_{\sigma(<m)}$, with time index $t \in \{0, 1, 2, \ldots, N-m-1, N-m\}$, as in Figure \ref{fig:probabilistic_graphical_model}. That is, $X_t = \mathbf{x}_{\sigma(<N-t)}$. To generate data, we follow the time reversal of this Markov chain. To obtain the time reversal, first consider reversing a singular time step, from $t$ to $t-1$. This corresponds to learning
\begin{equation} \label{eqn:reverse_step}
\begin{split}
    \text{log } p_{\theta}(X_{t-1} | X_{t}) = \text{log } p_{\theta}(\mathbf{x}_{\sigma(<N-(t-1))} | \mathbf{x}_{\sigma(<N-t)})
    = \text{log } p_{\theta}(\mathbf{x}_{\sigma(N-t+1)} | \mathbf{x}_{\sigma(<N-t)}).
\end{split}
\end{equation}
That is, we predict the next token's density.
To reverse the whole process, we sum for each timestep: 
\begin{equation} \label{eqn:reverse_joint}
\begin{split}
    \sum_{t=1}^{N-m}\text{log } p_{\theta}(\mathbf{x}_{\sigma(<N-t+1)} | \mathbf{x}_{\sigma(<N-t)}) &= \text{log } p_{\theta}(\mathbf{x}_{\sigma(<N)} | \mathbf{x}_{\sigma(<m)}) \\
    = \text{log } p_{\theta}(\mathbf{x}_{\sigma(<m)}, \mathbf{x}_{\sigma(\geq m)} | \mathbf{x}_{\sigma(<m)}) 
    &= \text{log } p_{\theta}(\mathbf{x}_{\sigma(\geq m)} | \mathbf{x}_{\sigma(<m)}),
\end{split}
\end{equation}
which gives us Equation \ref{eqn:loss}'s joint conditional probability.
This loss is different than the conditionally independent losses used in \cite{shih2022training} and discrete diffusion models \cite{sahoo2024simple, lou2023discrete}. Notably, their architectures, due to the lack of causal-like attention masking, could not support joint losses.

\noindent\textbf{Expectations over Token Orderings and Prompt Lengths:}
In the objective (Equation \ref{eqn:loss}), we do not make assumptions about the distribution of the prompt length $m \sim f(\cdot)$ nor the distribution of the prompt ordering $\sigma \sim s(\cdot | m)$ (so long as $\sigma$ follows the decomposition protocol laid out in Section \ref{subsec:efficient_mask}). In general, these distributions are task-dependent. For instance, in regular autoregressive tasks, $\sigma$ would be deterministic, \textit{i.e.}, the identity function. See Appendices \ref{subsec:mask_distribution} and \ref{sec:ablations} for our realizations of $m \sim f(\cdot)$ and $\sigma \sim s(\cdot | m)$.

\section{Experiments}
Our first experiment (Section \ref{subsec:empirical_verify}) empirically verifies that \assd with \asarm indeed preserves the output distribution while being faster, as predicted by our theoretical analysis.
Our other experiments (Section \ref{sec:infilling_benchmark}) show that on both natural language and coding benchmarks, \asarm, even with a fraction of the training resources, beat other model classes of comparable size, and are competitive with models orders of magnitude larger.

\subsection{Correctness and Speed of Any-Subset Speculative Decoding}\label{subsec:empirical_verify}

\begin{table}[]
    \centering
    \begin{tabular}{l r r r r r}
        \rowcolor{gray!35} \textbf{Sampler} & \textbf{Gen PPL} & \textbf{Entropy} & \textbf{Model NFE} & \textbf{Aux NFE} & \textbf{Time (s)} \\
        \toprule
        \textit{Sequential} & $107.9 \pm 1.6$ & $7.65 \pm 0.01$ & $486.0 \pm 0.0$ & $\mathbf{0.0 \pm 0.0}$ & $18.21 \pm 0.00$ \\
        \textit{\assd (N-Gram)} & $111.7 \pm 2.0$ & $7.64 \pm 0.01$ & $\mathbf{422.0 \pm 0.7}$ & $422.0 \pm 0.7$ & $16.80 \pm 0.03$ \\
        \textbf{\textit{\assd (Self)}} & $107.6 \pm 1.6$ & $7.64 \pm 0.01$ & $\underline{434.1} \pm 0.4$ & $\mathbf{0.0 \pm 0.0}$ & $\mathbf{16.50 \pm 0.02}$ \\
        \bottomrule
    \end{tabular}
    \caption{\textbf{Comparison of Speculative and Sequential Decoding:} Left-to-right, entries show mean and standard error of generative perplexity (judge: GPT-2 Large), Shannon entropy, number of AS-ARM function evaluations, number of auxiliary draft model calls, and wall clock time. 
    \textit{\assd (Self)} is from Algorithm \ref{alg:any_order_speculative}. We also modify Algorithm \ref{alg:any_order_speculative} with context-derived \textit{N-Grams} \cite{stewart2024n} as a draft model (see Appendix \ref{subsec:speculative_decoding_experiment_details}). We set $k = 5$ for \assd.}
    \label{tab:speculative_decoding}
\end{table}

We prompt the model with $640$ masked sequences from the WikiText test dataset \cite{wikitext_merity2016pointer}. As in training, sequences are packed together into chunks of $512$ tokens. We randomly mask out $95\%$ of each sequence, leaving $5\%$ of tokens (uniformly scattered throughout the sequence) as the prompt. We evaluate our finetuned model.
See Table \ref{tab:speculative_decoding} for results. Appendix \ref{sec:sample_outputs} shows sample outputs.

\noindent\textbf{Empirical Correctness of Distribution:}
As expected from Theorem \ref{thm:correct_dist}, the output distribution from speculative decoding is statistically the same as the output distribution from sequential decoding, measured by generative perplexity and entropy. 

\noindent\textbf{Speed-Up:}
Finally, both variants of \assd (parallel sampling from self as draft, context n-gram \cite{stewart2024n} as draft) provide a statistically significant speedup over regular sequential decoding, as predicted by Lemma \ref{lem:num_evals}. (When decoding sequentially, the number of calls to the neural network is the same as the number of masked tokens.) \assd with paralllel sampling is the fastest method, and requires the least total NFEs. \assd with context n-gram is not that far behind: the intuition is that although it gives lower-quality drafts, it is very cheap. This phenomenon was also observed in similar studies for AR models \cite{stewart2024n}. However, only $1.15$ tokens get generated per iteration when using context n-gram, as opposed to $2.24$ tokens with parallel sampling as draft. 

\subsection{Infilling Benchmark Tasks}\label{sec:infilling_benchmark}

\begin{table}[]
    \centering
    \begin{tabular}{l c c c c c c}
        \rowcolor{gray!35} \textbf{Model} & \textbf{Size} & \textbf{Tokens} & \multicolumn{2}{c}{\textbf{Infill 1/5}} & \multicolumn{2}{c}{\textbf{Infill 3/5}} \\
        \midrule
        \rowcolor{gray!20} & & & \textbf{\textit{ROUGE 1/2/L}} & \textbf{\textit{NFE}} & \textbf{\textit{ROUGE 1/2/L}} & \textbf{\textit{NFE}} \\
        \toprule
        GPT2-S & 127M & n/a & $9.5/0.4/8.7$ & $10.7 \pm 2.8$ & $13.5/0.6/10.2$ & $31.8 \pm 6.0$\\
        SEDD-S & 170M & 210B & $11.6/0.8/10.7$ & $32.0 \pm 0.0$ & $16.2/1.3/12.2$ & $64.0 \pm 0.0$ \\
        MDLM & 130M & 262B & $11.6/1.1/10.7$ & $32.0 \pm 0.0$ & $13.3/1.0/10.4$ & $64.0 \pm 0.0$\\
        DiffuGPT-S & 127M & 130B & $14.0/1.5/13.0$ & $32.0 \pm 0.0$ & $16.4/\mathbf{2.0}/\mathbf{14.2}$ & $64.0 \pm 0.0$ \\
        XLNet-OTS & \textit{\textbf{110M}} & \textit{\textbf{33B}} & $\mathbf{14.4}/\mathbf{1.7}/\mathbf{13.1}$ & $\mathbf{8.7} \pm 2.5$ & $7.7/0.6/6.4$ & $\mathbf{18.5} \pm 6.8$ \\
        \textit{XLNet-FT} & \textit{\textbf{110M}} & \textit{\textbf{12B}} & $13.1/1.1/12.0$ & $10.4 \pm 2.8$ & $\mathbf{18.0}/1.4/13.2$ & $\underline{30.4} \pm 6.0$ \\
    \bottomrule
    \end{tabular}
    \caption{\textbf{Performance on ROCStories Infilling:} We compare ROUGE ($\uparrow$) of various models on infilling ability. "Tokens" is the number of training tokens (excluding those for the pretrained initialization). In "Infill 1/5", sentences \{1, 2, 4, 5\} are input, and \{3\} is infilled. In "Infill 3/5", sentences \{1, 5\} are input, and \{2, 3, 4\} are infilled. We report $\mu \pm \sigma$ NFEs.}
    \label{tab:infill_results}
\end{table}

\noindent\textbf{Natural Language:} We also investigate whether \asarm can outperform discrete diffusion models and regular autoregressive models on infilling tasks, in which bidirectional context is key.
As such, we follow \cite{gong2024scaling}'s evaluation setup on the ROCStories dataset \cite{rocstories_mostafazadeh2016corpus}. We test on $1871$ short stories of five sentences each, and get $5$ completions (trials) per story. We blank out either the middle one or middle three sentences, and have the model predict the missing sentence(s). 

See Table \ref{tab:infill_results} for results. We see that XLNet is generally better than previous models of comparable size. This is encouraging, since it is the smallest model in the table, and used the fewest training resources.
Off-the-shelf (OTS) XLNet without any finetuning is the best at infilling a single missing sentence. Indeed, infilling one out of five sentences corresponds to a $\sim20\%$ masking ratio, which is roughly what it was trained on. Finetuning on a wider distribution of masking ratios following the objective in Equation \ref{eqn:loss}, splits finite model capacity among multiple tasks. 
Unsurprisingly, when the task moves to infilling three out of five sentences, our finetuned (FT) XLNet clearly surpasses the off-the-shelf XLNet, and is comparable to DiffuGPT-S.

\begin{table}[]
    \centering
    \begin{tabular}{l r r r c}
        \rowcolor{gray!35} \textbf{Model} & \textbf{Size} & \textbf{Code Tokens} & \textbf{Lang. Tokens} & \textbf{Pass @ 1}\\
        \midrule
        XLNet-Code & \textbf{110M} & \textbf{15B} & \textbf{0B} & 38.59 \\
        DiffuLLaMA & 6738M & 19B & 46B & \textbf{40.68} \\
    \bottomrule
    \end{tabular}
    \caption{\textbf{Performance on HumanEval Infilling:} Comparison of code infilling abilities of XLNet finetuned on code versus DiffuLLaMA. We use HumanEval's single-line infilling task \cite{bavarian2022efficient}, evaluated by pass@1 (each attempt counts, instead of only the best attempt) on $5165$ trials. 
    }
    \label{tab:code_infill_results}
\end{table}

\noindent\textbf{{Code Generation:}} Table \ref{tab:code_infill_results} shows that \asarm finetuned on code are competitive with models that are orders of magnitude larger. More details about this experiment are in Appendix \ref{subsec:code_gen}.

\section{Related Works}

\noindent\textbf{Any-Order Autoregressive Models:}
The defining characteristic of any-order autoregressive models is their ability to estimate the probability density of arbitrary marginal and/or conditional queries. This is an extension of vanilla autoregressive models' ability to estimate probability density. 
We primarily base our work off of XLNet \cite{yang2019xlnet} and MAC \cite{shih2022training}. From XLNet, we adopt the causal attention architecture: crucially, this allows us to estimate probability density in a parallelized single step, \textit{i.e.}, $O(1)$ time \cite{yang2019xlnet}. From MAC, we adopt recursive decomposition of queries on a binary lattice, which reduces the permutations needed to be learned from $N!$ to $2^N$ \cite{shih2022training}: this characteristic made MAC the first any-subset autoregressive model.

Although XLNet came in 2019, subsequent AO-ARMs \cite{hoogeboom2021autoregressive, strauss2021arbitrary, shih2022training} abandoned causal attention, in favor of full attention with absorbing state tokens to represent "masked" queries. 
This increases runtime of joint density estimation from $O(1)$ to $O(N)$: to satisfy the factorization rule, each token's density is estimated one-by-one so it conditions only on the preceding tokens.

\noindent\textbf{Discrete Diffusion:}
Discrete diffusion models data generation as the reversal of an inhomogeneous continuous-time Markov chain (CTMC). Empirically, the best performing CTMC is an absorbing state process. 
The advantages of discrete diffusion are that it can generate tokens in parallel, with arbitary prompt locations.
The seminal work is D3PM \cite{austin2021structured}. Later, LDR \cite{campbell2022continuous} formalized discrete diffusion under the lens of CTMC theory. More recently, SEDD \cite{lou2023discrete}, MDLM \cite{sahoo2024simple}, DiffuLLaMA \cite{gong2024scaling}, and LLaDA \cite{nie2025large} created formulations of discrete diffusion models that were competitive with or surpassing autoregressive models in performance.

One limitation of discrete diffusion models is their inability to do joint density estimation. 
Also, although tokens can be generated in parallel with large discretized timesteps, they are generated under the conditional independence assumption, which may not adhere to the limiting joint distribution. 
Furthermore, discrete diffusion architectures perform full attention across all tokens, even the masked ones \cite{lou2023discrete, sahoo2024simple}; due to this lack of causal attention masking, it is not straightforward to apply KV-caching speedup techniques.

\noindent\textbf{Speculative Decoding:}
Speculative decoding is an algorithm developed for autoregressive models that enables quick token generation from an inexpensive draft model. Via some techniques related to rejection sampling, the outputs provably adhere to the joint distribution of the target language model \cite{leviathan2023fast, chen2023accelerating}. 
Further related works are in Appendix \ref{sec:more_related_works}.

\section{Discussion, Limitations, Conclusion}

Our results suggest that the long-forgotten any-subset autoregressive models (\asarm) are a promising language modeling methodology. We theoretically and empirically show that the longstanding problem of principled parallel token generation from the joint distribution is easily solved once we combine \asarm' parallelized density estimation and generation capabilities with speculative decoding. Furthermore, benchmark tasks indicate that \asarm can match or surpass the currently dominant model designs.
Extensions include arbitrary-order KV caching (which is non-trivial, due to relative positional encodings \cite{shaw2018self}), adding Flash Attention \cite{dao2022flashattention}, investigating scaling laws, and designing more optimized architectures.

\section*{Acknowledgements}

\textcolor{blue}{This material is based upon work supported by the U.S. Department of Energy, Office of Science, Office of Advanced Scientific Computing Research, Department of Energy Computational Science Graduate Fellowship under Award Number DE-SC0025528 to Gabe Guo.
We thank Amil Merchant, Tristan Saidi, Luke Bailey, and Ajay Sridhar for helpful discussion.
}

\printbibliography


\newpage

\newpage
\section*{NeurIPS Paper Checklist}

\begin{enumerate}

\item {\bf Claims}
    \item[] Question: Do the main claims made in the abstract and introduction accurately reflect the paper's contributions and scope?
    \item[] Answer: \answerYes{} 
    \item[] Justification: The abstract and summary are essentially shortened versions of our paper, except without extensive technical details.
    \item[] Guidelines:
    \begin{itemize}
        \item The answer NA means that the abstract and introduction do not include the claims made in the paper.
        \item The abstract and/or introduction should clearly state the claims made, including the contributions made in the paper and important assumptions and limitations. A No or NA answer to this question will not be perceived well by the reviewers. 
        \item The claims made should match theoretical and experimental results, and reflect how much the results can be expected to generalize to other settings. 
        \item It is fine to include aspirational goals as motivation as long as it is clear that these goals are not attained by the paper. 
    \end{itemize}

\item {\bf Limitations}
    \item[] Question: Does the paper discuss the limitations of the work performed by the authors?
    \item[] Answer: \answerYes{} 
    \item[] Justification: The last paragraph contains a list of future directions, which are things our study did not get to address.
    \item[] Guidelines:
    \begin{itemize}
        \item The answer NA means that the paper has no limitation while the answer No means that the paper has limitations, but those are not discussed in the paper. 
        \item The authors are encouraged to create a separate "Limitations" section in their paper.
        \item The paper should point out any strong assumptions and how robust the results are to violations of these assumptions (e.g., independence assumptions, noiseless settings, model well-specification, asymptotic approximations only holding locally). The authors should reflect on how these assumptions might be violated in practice and what the implications would be.
        \item The authors should reflect on the scope of the claims made, e.g., if the approach was only tested on a few datasets or with a few runs. In general, empirical results often depend on implicit assumptions, which should be articulated.
        \item The authors should reflect on the factors that influence the performance of the approach. For example, a facial recognition algorithm may perform poorly when image resolution is low or images are taken in low lighting. Or a speech-to-text system might not be used reliably to provide closed captions for online lectures because it fails to handle technical jargon.
        \item The authors should discuss the computational efficiency of the proposed algorithms and how they scale with dataset size.
        \item If applicable, the authors should discuss possible limitations of their approach to address problems of privacy and fairness.
        \item While the authors might fear that complete honesty about limitations might be used by reviewers as grounds for rejection, a worse outcome might be that reviewers discover limitations that aren't acknowledged in the paper. The authors should use their best judgment and recognize that individual actions in favor of transparency play an important role in developing norms that preserve the integrity of the community. Reviewers will be specifically instructed to not penalize honesty concerning limitations.
    \end{itemize}

\item {\bf Theory assumptions and proofs}
    \item[] Question: For each theoretical result, does the paper provide the full set of assumptions and a complete (and correct) proof?
    \item[] Answer: \answerYes{} 
    \item[] Justification: See the appendix for proofs.
    \item[] Guidelines:
    \begin{itemize}
        \item The answer NA means that the paper does not include theoretical results. 
        \item All the theorems, formulas, and proofs in the paper should be numbered and cross-referenced.
        \item All assumptions should be clearly stated or referenced in the statement of any theorems.
        \item The proofs can either appear in the main paper or the supplemental material, but if they appear in the supplemental material, the authors are encouraged to provide a short proof sketch to provide intuition. 
        \item Inversely, any informal proof provided in the core of the paper should be complemented by formal proofs provided in appendix or supplemental material.
        \item Theorems and Lemmas that the proof relies upon should be properly referenced. 
    \end{itemize}

    \item {\bf Experimental result reproducibility}
    \item[] Question: Does the paper fully disclose all the information needed to reproduce the main experimental results of the paper to the extent that it affects the main claims and/or conclusions of the paper (regardless of whether the code and data are provided or not)?
    \item[] Answer: \answerYes{} 
    \item[] Justification: We have comprehensive details in the appendix. We will also release code upon acceptance.
    \item[] Guidelines:
    \begin{itemize}
        \item The answer NA means that the paper does not include experiments.
        \item If the paper includes experiments, a No answer to this question will not be perceived well by the reviewers: Making the paper reproducible is important, regardless of whether the code and data are provided or not.
        \item If the contribution is a dataset and/or model, the authors should describe the steps taken to make their results reproducible or verifiable. 
        \item Depending on the contribution, reproducibility can be accomplished in various ways. For example, if the contribution is a novel architecture, describing the architecture fully might suffice, or if the contribution is a specific model and empirical evaluation, it may be necessary to either make it possible for others to replicate the model with the same dataset, or provide access to the model. In general. releasing code and data is often one good way to accomplish this, but reproducibility can also be provided via detailed instructions for how to replicate the results, access to a hosted model (e.g., in the case of a large language model), releasing of a model checkpoint, or other means that are appropriate to the research performed.
        \item While NeurIPS does not require releasing code, the conference does require all submissions to provide some reasonable avenue for reproducibility, which may depend on the nature of the contribution. For example
        \begin{enumerate}
            \item If the contribution is primarily a new algorithm, the paper should make it clear how to reproduce that algorithm.
            \item If the contribution is primarily a new model architecture, the paper should describe the architecture clearly and fully.
            \item If the contribution is a new model (e.g., a large language model), then there should either be a way to access this model for reproducing the results or a way to reproduce the model (e.g., with an open-source dataset or instructions for how to construct the dataset).
            \item We recognize that reproducibility may be tricky in some cases, in which case authors are welcome to describe the particular way they provide for reproducibility. In the case of closed-source models, it may be that access to the model is limited in some way (e.g., to registered users), but it should be possible for other researchers to have some path to reproducing or verifying the results.
        \end{enumerate}
    \end{itemize}

\item {\bf Open access to data and code}
    \item[] Question: Does the paper provide open access to the data and code, with sufficient instructions to faithfully reproduce the main experimental results, as described in supplemental material?
    \item[] Answer: \answerYes{} 
    \item[] Justification: Data and code will be released upon acceptance. It currently contains identifying information about the authors, so it is not in the submission.
    \item[] Guidelines:
    \begin{itemize}
        \item The answer NA means that paper does not include experiments requiring code.
        \item Please see the NeurIPS code and data submission guidelines (\url{https://nips.cc/public/guides/CodeSubmissionPolicy}) for more details.
        \item While we encourage the release of code and data, we understand that this might not be possible, so “No” is an acceptable answer. Papers cannot be rejected simply for not including code, unless this is central to the contribution (e.g., for a new open-source benchmark).
        \item The instructions should contain the exact command and environment needed to run to reproduce the results. See the NeurIPS code and data submission guidelines (\url{https://nips.cc/public/guides/CodeSubmissionPolicy}) for more details.
        \item The authors should provide instructions on data access and preparation, including how to access the raw data, preprocessed data, intermediate data, and generated data, etc.
        \item The authors should provide scripts to reproduce all experimental results for the new proposed method and baselines. If only a subset of experiments are reproducible, they should state which ones are omitted from the script and why.
        \item At submission time, to preserve anonymity, the authors should release anonymized versions (if applicable).
        \item Providing as much information as possible in supplemental material (appended to the paper) is recommended, but including URLs to data and code is permitted.
    \end{itemize}

\item {\bf Experimental setting/details}
    \item[] Question: Does the paper specify all the training and test details (e.g., data splits, hyperparameters, how they were chosen, type of optimizer, etc.) necessary to understand the results?
    \item[] Answer: \answerYes{} 
    \item[] Justification: See Appendix \ref{sec:additional_experimental_details}.
    \item[] Guidelines:
    \begin{itemize}
        \item The answer NA means that the paper does not include experiments.
        \item The experimental setting should be presented in the core of the paper to a level of detail that is necessary to appreciate the results and make sense of them.
        \item The full details can be provided either with the code, in appendix, or as supplemental material.
    \end{itemize}

\item {\bf Experiment statistical significance}
    \item[] Question: Does the paper report error bars suitably and correctly defined or other appropriate information about the statistical significance of the experiments?
    \item[] Answer: \answerYes{} 
    \item[] Justification: See Table \ref{tab:speculative_decoding}.
    \item[] Guidelines:
    \begin{itemize}
        \item The answer NA means that the paper does not include experiments.
        \item The authors should answer "Yes" if the results are accompanied by error bars, confidence intervals, or statistical significance tests, at least for the experiments that support the main claims of the paper.
        \item The factors of variability that the error bars are capturing should be clearly stated (for example, train/test split, initialization, random drawing of some parameter, or overall run with given experimental conditions).
        \item The method for calculating the error bars should be explained (closed form formula, call to a library function, bootstrap, etc.)
        \item The assumptions made should be given (e.g., Normally distributed errors).
        \item It should be clear whether the error bar is the standard deviation or the standard error of the mean.
        \item It is OK to report 1-sigma error bars, but one should state it. The authors should preferably report a 2-sigma error bar than state that they have a 96\% CI, if the hypothesis of Normality of errors is not verified.
        \item For asymmetric distributions, the authors should be careful not to show in tables or figures symmetric error bars that would yield results that are out of range (e.g. negative error rates).
        \item If error bars are reported in tables or plots, The authors should explain in the text how they were calculated and reference the corresponding figures or tables in the text.
    \end{itemize}

\item {\bf Experiments compute resources}
    \item[] Question: For each experiment, does the paper provide sufficient information on the computer resources (type of compute workers, memory, time of execution) needed to reproduce the experiments?
    \item[] Answer: \answerYes{} 
    \item[] Justification: See Appendix \ref{sec:additional_experimental_details}.
    \item[] Guidelines:
    \begin{itemize}
        \item The answer NA means that the paper does not include experiments.
        \item The paper should indicate the type of compute workers CPU or GPU, internal cluster, or cloud provider, including relevant memory and storage.
        \item The paper should provide the amount of compute required for each of the individual experimental runs as well as estimate the total compute. 
        \item The paper should disclose whether the full research project required more compute than the experiments reported in the paper (e.g., preliminary or failed experiments that didn't make it into the paper). 
    \end{itemize}
    
\item {\bf Code of ethics}
    \item[] Question: Does the research conducted in the paper conform, in every respect, with the NeurIPS Code of Ethics \url{https://neurips.cc/public/EthicsGuidelines}?
    \item[] Answer: \answerYes{} 
    \item[] Justification: Experiments were purely computational and conducted based on publicly available data.
    \item[] Guidelines:
    \begin{itemize}
        \item The answer NA means that the authors have not reviewed the NeurIPS Code of Ethics.
        \item If the authors answer No, they should explain the special circumstances that require a deviation from the Code of Ethics.
        \item The authors should make sure to preserve anonymity (e.g., if there is a special consideration due to laws or regulations in their jurisdiction).
    \end{itemize}

\item {\bf Broader impacts}
    \item[] Question: Does the paper discuss both potential positive societal impacts and negative societal impacts of the work performed?
    \item[] Answer: \answerNo{} 
    \item[] Justification: The negative societal impacts are the same as the negative societal impacts of any generative language model: disinformation, unsafe code generation, etc. The positive societal impacts are also generic: accelerated writing efficiency, etc.
    \item[] Guidelines:
    \begin{itemize}
        \item The answer NA means that there is no societal impact of the work performed.
        \item If the authors answer NA or No, they should explain why their work has no societal impact or why the paper does not address societal impact.
        \item Examples of negative societal impacts include potential malicious or unintended uses (e.g., disinformation, generating fake profiles, surveillance), fairness considerations (e.g., deployment of technologies that could make decisions that unfairly impact specific groups), privacy considerations, and security considerations.
        \item The conference expects that many papers will be foundational research and not tied to particular applications, let alone deployments. However, if there is a direct path to any negative applications, the authors should point it out. For example, it is legitimate to point out that an improvement in the quality of generative models could be used to generate deepfakes for disinformation. On the other hand, it is not needed to point out that a generic algorithm for optimizing neural networks could enable people to train models that generate Deepfakes faster.
        \item The authors should consider possible harms that could arise when the technology is being used as intended and functioning correctly, harms that could arise when the technology is being used as intended but gives incorrect results, and harms following from (intentional or unintentional) misuse of the technology.
        \item If there are negative societal impacts, the authors could also discuss possible mitigation strategies (e.g., gated release of models, providing defenses in addition to attacks, mechanisms for monitoring misuse, mechanisms to monitor how a system learns from feedback over time, improving the efficiency and accessibility of ML).
    \end{itemize}
    
\item {\bf Safeguards}
    \item[] Question: Does the paper describe safeguards that have been put in place for responsible release of data or models that have a high risk for misuse (e.g., pretrained language models, image generators, or scraped datasets)?
    \item[] Answer: \answerNA{} 
    \item[] Justification: This is a proof-of-concept. The work has not been scaled to a level where it could be harmful.
    \item[] Guidelines:
    \begin{itemize}
        \item The answer NA means that the paper poses no such risks.
        \item Released models that have a high risk for misuse or dual-use should be released with necessary safeguards to allow for controlled use of the model, for example by requiring that users adhere to usage guidelines or restrictions to access the model or implementing safety filters. 
        \item Datasets that have been scraped from the Internet could pose safety risks. The authors should describe how they avoided releasing unsafe images.
        \item We recognize that providing effective safeguards is challenging, and many papers do not require this, but we encourage authors to take this into account and make a best faith effort.
    \end{itemize}

\item {\bf Licenses for existing assets}
    \item[] Question: Are the creators or original owners of assets (e.g., code, data, models), used in the paper, properly credited and are the license and terms of use explicitly mentioned and properly respected?
    \item[] Answer: \answerYes{} 
    \item[] Justification: We build our work upon open-source, and cite all relevant papers.
    \item[] Guidelines:
    \begin{itemize}
        \item The answer NA means that the paper does not use existing assets.
        \item The authors should cite the original paper that produced the code package or dataset.
        \item The authors should state which version of the asset is used and, if possible, include a URL.
        \item The name of the license (e.g., CC-BY 4.0) should be included for each asset.
        \item For scraped data from a particular source (e.g., website), the copyright and terms of service of that source should be provided.
        \item If assets are released, the license, copyright information, and terms of use in the package should be provided. For popular datasets, \url{paperswithcode.com/datasets} has curated licenses for some datasets. Their licensing guide can help determine the license of a dataset.
        \item For existing datasets that are re-packaged, both the original license and the license of the derived asset (if it has changed) should be provided.
        \item If this information is not available online, the authors are encouraged to reach out to the asset's creators.
    \end{itemize}

\item {\bf New assets}
    \item[] Question: Are new assets introduced in the paper well documented and is the documentation provided alongside the assets?
    \item[] Answer: \answerYes{} 
    \item[] Justification: We have instructions on how to run the code. But again, the code is not released yet, since it contains some identifying author information.
    \item[] Guidelines:
    \begin{itemize}
        \item The answer NA means that the paper does not release new assets.
        \item Researchers should communicate the details of the dataset/code/model as part of their submissions via structured templates. This includes details about training, license, limitations, etc. 
        \item The paper should discuss whether and how consent was obtained from people whose asset is used.
        \item At submission time, remember to anonymize your assets (if applicable). You can either create an anonymized URL or include an anonymized zip file.
    \end{itemize}

\item {\bf Crowdsourcing and research with human subjects}
    \item[] Question: For crowdsourcing experiments and research with human subjects, does the paper include the full text of instructions given to participants and screenshots, if applicable, as well as details about compensation (if any)? 
    \item[] Answer: \answerNo{} 
    \item[] Justification: We don't do human experiments.
    \item[] Guidelines:
    \begin{itemize}
        \item The answer NA means that the paper does not involve crowdsourcing nor research with human subjects.
        \item Including this information in the supplemental material is fine, but if the main contribution of the paper involves human subjects, then as much detail as possible should be included in the main paper. 
        \item According to the NeurIPS Code of Ethics, workers involved in data collection, curation, or other labor should be paid at least the minimum wage in the country of the data collector. 
    \end{itemize}

\item {\bf Institutional review board (IRB) approvals or equivalent for research with human subjects}
    \item[] Question: Does the paper describe potential risks incurred by study participants, whether such risks were disclosed to the subjects, and whether Institutional Review Board (IRB) approvals (or an equivalent approval/review based on the requirements of your country or institution) were obtained?
    \item[] Answer: \answerNA{} 
    \item[] Justification: No human experiments.
    \item[] Guidelines:
    \begin{itemize}
        \item The answer NA means that the paper does not involve crowdsourcing nor research with human subjects.
        \item Depending on the country in which research is conducted, IRB approval (or equivalent) may be required for any human subjects research. If you obtained IRB approval, you should clearly state this in the paper. 
        \item We recognize that the procedures for this may vary significantly between institutions and locations, and we expect authors to adhere to the NeurIPS Code of Ethics and the guidelines for their institution. 
        \item For initial submissions, do not include any information that would break anonymity (if applicable), such as the institution conducting the review.
    \end{itemize}

\item {\bf Declaration of LLM usage}
    \item[] Question: Does the paper describe the usage of LLMs if it is an important, original, or non-standard component of the core methods in this research? Note that if the LLM is used only for writing, editing, or formatting purposes and does not impact the core methodology, scientific rigorousness, or originality of the research, declaration is not required.
    \item[] Answer: \answerYes{} 
    \item[] Justification: The whole paper is about language models.
    \item[] Guidelines:
    \begin{itemize}
        \item The answer NA means that the core method development in this research does not involve LLMs as any important, original, or non-standard components.
        \item Please refer to our LLM policy (\url{https://neurips.cc/Conferences/2025/LLM}) for what should or should not be described.
    \end{itemize}

\end{enumerate}

\newpage

\appendix

\section{Proofs}\label{sec:proofs}

Line numbers refer to Algorithm \ref{alg:any_order_speculative}.

\textbf{Lemma \ref{lem:first_token_accept}.} \textit{The first token speculated in each loop iteration will always be accepted.}

\begin{proof}
When $i = n$ on Line \ref{line:oracle_loop}, 
\begin{align}
    q_{\sigma(i)} &= p(\tilde{x}_{\sigma(i)} | \mathbf{x}_{\sigma(<n)}, \mathbf{\tilde{x}}_{\sigma[n:i)})  \\
    &= p(\tilde{x}_{\sigma(i)} | \mathbf{x}_{\sigma(<n)}, \mathbf{\tilde{x}}_{\sigma[n:n)}) \\
    &= p(\tilde{x}_{\sigma(i)} | \mathbf{x}_{\sigma(<n)}) \\
    &= p_{\sigma(i)}.
\end{align}
So, $r < \frac{q_{\sigma(i)}}{p_{\sigma(i)}} = 1$ on Line \ref{line:accept}, and thus, when $i = n$, $\tilde{x}_{\sigma(n)}$ will always be accepted on Line \ref{line:accept_proposal_token}.
\end{proof}

\textbf{Theorem \ref{lem:num_evals}.} \textit{Algorithm \ref{alg:any_order_speculative} requires no more than $N - m$ total function evaluations of $p(\cdot | \cdot)$. That is, there will never be more calls to a neural network than the number of tokens decoded.}

\begin{proof}
The idea underlying this proof is that each iteration of the while loop on Line \ref{line:main_loop} has two function evaluations.
The first function evaluation is on Lines \ref{line:speculate_loop} to \ref{line:end_speculate_loop}, to speculate tokens. The second function evaluation is on Lines \ref{line:oracle_loop} to \ref{line:end_oracle}, to evaluate the oracle density.

We then just need to show that at least two tokens will be decoded, making for minimum one token generated per function evaluation. 

By Lemma \ref{lem:first_token_accept}, the first token (at $i = n$) is always accepted. So, we can move on to $i = n + 1$.

When $i = n + 1$, regardless of whether we accept or reject $\tilde{x}_{\sigma(n+1)}$, we will still obtain a value for $x_{\sigma(n)}$, whether it is $\tilde{x}_{\sigma(n)}$ on Line \ref{line:accept_proposal_token} or a resampling from the adjusted $()_{+}$ distribution on Line \ref{line:resample} (which does not require additional calls to $p(\cdot | \cdot)$, as it makes use of already calculated distributions from Lines \ref{line:speculate_iter} and \ref{line:gt_density}).

So, on each loop iteration, we are guaranteed to get token values for at least $x_{\sigma(n)}, x_{\sigma(n+1)}$, thereby giving us the requisite two tokens.

On the last loop iteration, if $n = N - 1$, there is an edge case where only one token can be decoded (Line \ref{line:early_stop}). But, as previously shown in Lemma \ref{lem:first_token_accept}, this speculated token is mathematically guaranteed to be accepted, since it was the first and only one to be speculated. So, we can forgo the verification step for this final loop iteration. Thus, to generate this final token, we still only need one function evaluation, maintaining the lower bound of one token generated per function evaluation.
\end{proof}


\textbf{Theorem \ref{thm:correct_dist}.} \textit{Algorithm \ref{alg:any_order_speculative} produces samples from the true joint distribution $p(\mathbf{x}_{\sigma(\geq m)} | \mathbf{x}_{\sigma(<m)})$.}
\begin{proof}

The only differences between our algorithm and speculative decoding are:
\begin{enumerate}
    \item The use of $p$ as its own speculator. This is valid, because speculative decoding is agnostic to the speculator function, as long as it gives probability density estimates.
    \item The omission of the verification step when decoding the last token $n = T - 1$ on Line \ref{line:early_stop}. This is justified by Lemma \ref{lem:first_token_accept}, which shows that when $i = n = T - 1$, $\tilde{x}_{\sigma(n)}$ would always meet the acceptance criteria if it were to go through the verification on Line \ref{line:accept}.
    \item We do not sample an extra token if all of the speculations are accepted at the end of each loop iteration. However, this does not affect the distribution of the other tokens outputted. We skip this extra token sampling because, as shown in Lemma \ref{lem:first_token_accept}, the first token speculated in the next iteration will always be accepted, which ``makes up" for the omission of the extra sample. 
\end{enumerate}

Other than these inconsequential differences, this algorithm is simply speculative decoding, but with a different ordering than $\sigma = [0, 1, 2, \ldots, N - 1]$. We can permute/sort $\mathbf{x}$ in increasing order of $\sigma(i)$, and define as the result $\mathbf{y}$. Then, our algorithm is mathematically equivalent to speculative decoding with the inputs $\mathbf{y}$ as the sequence, and the draft model the same as the target model. The rest of the proof of correctness of our algorithm is therefore the same as in \cite{leviathan2023fast, chen2023accelerating}. For completeness, we reproduce their proof in Appendix \ref{sec:modified_rejection_sampling}, with some additional commentary.
\end{proof}

\section{Modified Rejection Sampling Correctness}\label{sec:modified_rejection_sampling}

We provide an extended proof of correctness for Theorem \ref{thm:correct_dist}, as it relates to the modified rejection sampling step in Algorithm \ref{alg:any_order_speculative}. Note that the main ideas and layout of this proof are copied from \cite{chen2023accelerating} and \cite{leviathan2023fast}. We only include it here for the reader's convenience and consistency with our notation.

\begin{proof}
WLOG, consider the random variable (token) $x_{\sigma(i)}$, returned by Algorithm \ref{alg:any_order_speculative} (\textit{\assd}). Consider the iteration where $n \leq i < n + k \leq N$, \textit{i.e.}, the iteration that generates $x_{\sigma(i)}$.

For the algorithm to be correct, we desire that $\mathbb{P}(x_{\sigma(i)} = x) = p(x_{\sigma(i)}=x | \mathbf{x}_{\sigma(<i)})$, where $\mathbb{P}(x_{\sigma(i)} = x)$ is the probability that \textit{\assd} generates token value $x$ at position $\sigma(i)$, and $p(x_{\sigma(i)}=x | \mathbf{x}_{\sigma(<i)})$ is the probability that regular sequential decoding would generate $x$ at position $\sigma(i)$. In other words, the distribution of \textit{\assd} is the same as in sequential decoding. Note that $\mathbf{x}_{\sigma(<i)} = \mathbf{x}_{\sigma(<n)} \oplus \mathbf{\tilde{x}}_{\sigma[n:i)}$ in both sequential decoding and our algorithm, because they both use generations from previous iterations (in addition to the prompt) as conditioning for future iterations.

Let $\tilde x$ (shorthand for $\tilde x_{\sigma(i)}$) be the token value generated from the conditionally independent draft in Line \ref{line:speculate_iter}. When $x_{\sigma(i)} = x$ is true, there are two mutually exclusive and collectively exhaustive possibilities (from the if-else statement): (1) $\tilde x$ was accepted, and $\tilde x = x$ (2) $\tilde x$ was rejected, and $x$ was the result of the resampling.

\begin{equation}\label{eqn:total_prob}
\begin{split}
    \mathbb{P}(x_{\sigma(i)} = x) &= \mathbb{P}(\tilde x \text{ accepted}, \tilde x = x) + \mathbb{P}(x_{\sigma(i)} = x, \tilde x \text{ rejected}) \\
    &= \mathbb{P}(\tilde x \text{ accepted } | \tilde x = x)\mathbb{P}(\tilde x = x) + \mathbb{P}(\tilde x \text{ rejected}) \mathbb{P}(x_{\sigma(i)} = x | \tilde x \text{ rejected})
\end{split}
\end{equation}

Analyzing the first term, we look to the "if" (accept) clause on Line \ref{line:accept}:
\begin{equation}\label{eqn:accept_proof}
\begin{split}
    \mathbb{P}(\tilde x \text{ accepted } | \tilde x = x)P(\tilde x = x) &= \text{min}\left(1, \frac{q_{\sigma(i)}(x)}{p_{\sigma(i)}(x)}\right) p_{\sigma(i)}(x) \\
    &= \text{min}\left(p_{\sigma(i)}(x), q_{\sigma(i)}(x)\right)
\end{split}
\end{equation}
Here, $q_{\sigma(i)}(x) = p(x_{\sigma(i)} = x | \mathbf{x}_{\sigma(<n)}, \mathbf{\tilde x}_{\sigma[n:i)})$, \textit{i.e.}, the oracle density from Line \ref{line:gt_density}; and $p_{\sigma(i)}(x) = p(x_{\sigma(i)} = x | \mathbf{x}_{\sigma(<n)})$, \textit{i.e.}, the speculator density from Line \ref{line:independent_density}.

Regarding the second term, we look at the "else" (reject) clause on Line \ref{line:reject}. We analyze each item in the product separately, as the expressions are long:
\begin{equation}\label{eqn:reject_prob}
\begin{split}
    \mathbb{P}(\tilde x \text{ rejected}) &= 
    1 - \mathbb{P}(\tilde x \text{ accepted}) \\
    &= 1 - \sum_{x'}\mathbb{P}(x_{\sigma(i)} = x', \tilde x \text{ accepted}) \\
    &= 1 - \sum_{x'}\text{min}\left(p_{\sigma(i)}(x'), q_{\sigma(i)}(x')\right) \\
    &= \sum_{x'}q_{\sigma(i)}(x') - \sum_{x'}\text{min}\left(p_{\sigma(i)}(x'), q_{\sigma(i)}(x')\right) \\
    &= \sum_{x'}q_{\sigma(i)}(x') - \text{min}\left(p_{\sigma(i)}(x'), q_{\sigma(i)}(x')\right) \\
    &= \sum_{x'}\text{max}\left(q_{\sigma(i)}(x') - p_{\sigma(i)}(x'), q_{\sigma(i)}(x') - q_{\sigma(i)}(x')\right) \\
    &= \sum_{x'}\text{max}\left(q_{\sigma(i)}(x') - p_{\sigma(i)}(x'), 0\right) \\
\end{split}
\end{equation}

From Line \ref{line:resample}:

\begin{equation}\label{eqn:get_x_cond_reject_prob}
\begin{split}
    \mathbb{P}(x_{\sigma(i)} = x | \tilde x \text{ rejected}) &= \left(p(x_{\sigma(i)} = x | \mathbf{x}_{\sigma(<n)}, \mathbf{\tilde{x}}_{\sigma[n:i)}) - p(x_{\sigma(i)} = x | \mathbf{x}_{\sigma(<n)})\right)_{+} \\
    &= \left(q_{\sigma(i)}(x) - p_{\sigma(i)}(x)\right)_{+} \\
    &= \frac{\text{max}\left(q_{\sigma(i)}(x) - p_{\sigma(i)}(x), 0\right)}{\sum_{x'}\text{max}\left(q_{\sigma(i)}(x') - p_{\sigma(i)}(x'), 0\right)}
\end{split}
\end{equation}

Following Equations \ref{eqn:reject_prob} and \ref{eqn:get_x_cond_reject_prob}, we have
\begin{equation}
\begin{split}
    \mathbb{P}(\tilde x \text{ rejected}) \mathbb{P}(x_{\sigma(i)} = x | \tilde x \text{ rejected}) &= \text{max}\left(q_{\sigma(i)}(x) - p_{\sigma(i)}(x), 0\right)
\end{split}
\end{equation}

Putting it all together back into Equation \ref{eqn:total_prob}, 
\begin{equation}\label{eqn:prob_x_proved}
\begin{split}
    \mathbb{P}(x_{\sigma(i)} = x) &= \text{min}\left(p_{\sigma(i)}(x), q_{\sigma(i)}(x)\right) + \text{max}\left(q_{\sigma(i)}(x) - p_{\sigma(i)}(x), 0\right) \\
    &= q_{\sigma(i)}(x) \\
    &= p(x_{\sigma(i)} = x | \mathbf{x}_{\sigma(<n)}, \mathbf{\tilde x}_{\sigma[n:i)}) \\
    &= p(x_{\sigma(i)} = x | \mathbf{x}_{\sigma(<i)})
\end{split}
\end{equation}

The last line of Equation \ref{eqn:prob_x_proved} is true, because to have gotten to index $i$ in the accept-reject loop (Line \ref{line:accept_reject_loop}), we must have accepted $\mathbf{\tilde x}_{\sigma[n:i)}$, \textit{i.e.}, $\mathbf{x}_{\sigma[n:i)} = \mathbf{\tilde x}_{\sigma[n:i)}$. Therefore, we have shown that \textit{\assd} gives the same per-token distribution as in sequential decoding.
Induction over $i \in [m: N)$ will easily show that \textit{\assd} gives the correct joint distribution.

\end{proof}

\section{Causal-Like Attention Masking}\label{sec:extra_causal_attention_masking}

We provide further discussion on the masking introduced in Section \ref{subsec:ao_arm_density}.

In particular, there is a subtlety in attention mechanisms: even if a token does not attend to itself, it still "sees" its own content because it constructs a query representation to compare against the keys. At first glance, this seemingly breaks the ban (Equation \ref{eqn:attn}) on a token attending to itself. The solution is to separate the positional and content information into two streams. The positional stream (which only has positional representations as input) is used to calculate the queries, \textit{i.e.}, row index of the attention map. The content stream (which has token value information) is used to calculate the key/value information, \textit{i.e.}, column index of the attention map. Thus, the evaluation of the positional queries does not "cheat" by looking at the ground truth content at its own position. Conceptually, this can be said to be more like cross-attention than the self-attention common in discrete diffusion models \cite{austin2021structured}. 

In contrast, architectures commonly used for discrete diffusion models \cite{lou2023discrete, deschenaux2024beyond, sahoo2024simple, austin2021structured} and other any-order autoregressive models \cite{shih2022training, hoogeboom2021autoregressive} do not support this kind of masking. In these architectures,
\begin{equation}\label{eqn:bad_masking}
    \forall i, j: A_{\sigma(i), \sigma(j)} = 1,
\end{equation}
which means that every token is allowed to attend to every token including itself when calculating its probability. From a probabilistic graphical modeling perspective, the outputted probabilities very roughly correspond to $\text{log } p(x_{\sigma(i)} | \mathbf{x})$, which would \textit{not} give us principled estimates on the already-visible tokens.

\section{Additional Experimental Details}\label{sec:additional_experimental_details}

\subsection{Dataset}

We finetune on the OpenWebText dataset \cite{Gokaslan2019OpenWeb}. Following \cite{sahoo2024simple}, we pack the sequences together, and split them into chunks of $512$ tokens, based on XLNet's case-sensitive tokenizer with a vocabulary of $32,000$ possible tokens. We have separator tokens to delineate the start of a new document.

\subsection{Mask Distribution}\label{subsec:mask_distribution}


We are interested in generating text from near-scratch, so we train a model where $m \sim \mathcal{U}[0.01 * N, 0.10 * N]$. To get $\sigma$, we first sample $\sigma_\text{pre} \sim \mathcal{U}(S_N)$, where $S_N$ represents all the possible permutations of the integers from $[0, N)$. But, remembering Section \ref{subsec:efficient_mask}'s efficient masking protocol, we sort each of $\sigma_\text{pre}(<m)$ and $\sigma_\text{pre}(\geq m)$ in ascending order to get $\sigma$. This eliminates the ambiguity in paths the model has to learn to calculate a given joint, while still maintaining which positions are in the prompt and which need to be predicted.

We also use a similar low-discrepancy sampler as \cite{sahoo2024simple} to reduce variance among prompt lengths within a batch.


\subsection{Training Hyperparameters}\label{subsec:hyperparameters}

In finetuning XLNet, we use maximum learning rate $10^{-4}$ and batch size $320$ ($16$ per device, $4$ accumulations, $5$ devices). We have linear learning rate warmup for $5000$ steps, and linear learning rate decay for $70,000$ additional steps, making for a total of $2.4 * 10^7$ samples seen. (Notably, this is far fewer than the $2.56 * 10^8$ samples that DiffuGPT-S saw \cite{gong2024scaling}.)  We start at $15\%$ masking rate, then linearly increase the minimum masking rate to $90\%$ and the maximum masking rate to $99\%$ over $5000$ steps.


We train on NVIDIA RTX A6000 Ada devices, which have 48 GB of RAM each. Training lasts around four days. We calculate validation loop metrics every $500$ steps on $64$ samples. We use the AdamW \cite{loshchilov2017fixing, kingma2014adam} optimizer. 

We also tried training for more epochs, but we found that performance saturated early on.

\subsection{Metrics}

We are focused on generation quality. To that end, we have two metrics. 

To quantify how likely the generated outputs of our model are, we calculate generative perplexity, where lower values are better. It is calculated as 
\begin{equation}
    \text{PPL}_\text{gen} = \left(e^{\sum_{i}\text{log }q(x_{i} | \mathbf{x}_{0 \ldots i-1})}\right)^{-1/N},
\end{equation}
where $q$ is an oracle language model (we use GPT-2 Large \cite{radford2019language}), and $\mathbf{x}$ is a generated sequence.

To quantify the diversity in tokens, we calculate Shannon entropy \cite{shannon1948mathematical}, where higher values are better. The formula is
\begin{equation}
H(\mathbf{x}) = -\sum_{x_i \in \mathbf{x}}\text{log}_{2}(p(x_i))p(x_i),
\end{equation}
where $p(x_i) = \frac{|\{j | \mathbf{x}_j = x_i\}|}{|\mathbf{x}|}$, \textit{i.e.}, the frequency of a token in the sequence. 

Typically, there is a trade-off between generative perplexity and Shannon entropy -- highly repetitive sequences of common words like "a" have low generative perplexity, but also low entropy. Random sequences of gibberish have high entropy and high generative perplexity. We want to generate sequences with high entropy and low generative perplexity.

\subsection{Speculative Decoding Variants}\label{subsec:speculative_decoding_experiment_details}

We compare sequential decoding to two variants of \assd: the first is described in Algorithm \ref{alg:any_order_speculative}. The second uses context-based n-grams, which were initially proposed for left-to-right speculative decoding \cite{stewart2024n}, although they easily fit into our framework. To adopt it to arbitrary order, we replace the speculator $p(\cdot | \cdot)$ in Lines \ref{line:speculate_loop}-\ref{line:end_speculate_loop} of Algorithm \ref{alg:any_order_speculative} with a context-based n-gram model $c(\cdot | \cdot)$, as in Algorithm \ref{alg:context_based_n_gram_speculation}. There, $c(a | b)$ is the probability over the partially decoded sequence that a bigram starting in $b$ ends in $a$, as in Equation \ref{eqn:context_n_gram}:
\begin{equation}\label{eqn:context_n_gram}
    c(a | b) = \frac{\sum_{i, x_i \neq \texttt{MASK}, x_{i+1} \neq \texttt{MASK}}\mathbbm{1}[\mathbf{x}_{i, i+1} = (a, b)]}{\sum_{j, x_j \neq \texttt{MASK}}\mathbbm{1}[x_{j} = b]},
\end{equation}
We initialize $c(\cdot | \cdot)$ by sweeping over the prompt, then update it iteratively, as the sequence is decoded. 

\begin{algorithm}
\caption{Speculation with Context-Based N-Grams}\label{alg:context_based_n_gram_speculation}
\LinesNumbered
\SetAlgoLined
\KwIn{same as Algorithm \ref{alg:any_order_speculative}}
\KwOut{see Algorithm \ref{alg:any_order_speculative}}
\For{$i \in [n:t)$}{
    \eIf{$x_{\sigma(i) - 1} \neq \texttt{MASK}$}{\label{line:if_ngram}
        $x_\text{cond} \leftarrow x_{\sigma(i) - 1}$ \\
    }{
        $x_\text{cond} \leftarrow \tilde{x}_{\sigma(i) - 1}$ \\
    }\label{line:end_if_ngram}
    $\tilde{x}_{\sigma(i)} \sim c(\cdot | x_\text{cond})$ // sample from partially conditioned distribution \\
    $p_{\sigma(i)} \leftarrow c(\tilde{x}_{\sigma(i)} | x_\text{cond})$ // get partially conditioned density \\
}
\end{algorithm}

\begin{theorem}
When $i \geq 1$, Algorithm \ref{alg:context_based_n_gram_speculation} always sets $x_\text{cond}$ to a valid non-\texttt{MASK} value.
\end{theorem}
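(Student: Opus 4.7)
The plan is a case analysis on the status of the sequence position $\ell := \sigma(i) - 1$ at the moment Line~\ref{line:if_ngram} inspects it; implicit in the statement is that $\ell \geq 0$, so I will assume $\sigma(i) \geq 1$. The key structural fact is the left-to-right decomposition protocol from Equation~\ref{eqn:left_to_right_forcing}: for any two indices $j, k \geq m$, $\sigma(j) > \sigma(k) \Leftrightarrow j > k$. In other words, the masked positions are generated in strictly left-to-right order along the sequence.

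I would partition the argument into two cases depending on whether $x_\ell$ has already been fixed. In the easier case, $\ell$ is either a prompt position (there is some $k < m$ with $\sigma(k) = \ell$) or a masked position that was finalized in an earlier iteration of the outer while loop (there is some $j < n$, $j \geq m$, with $\sigma(j) = \ell$). In the former $x_\ell$ was supplied as input, and in the latter $x_\ell$ was written on Line~\ref{line:accept_proposal_token} or Line~\ref{line:resample} during a previous outer iteration. In either subcase, $x_\ell \neq \texttt{MASK}$, so the \textbf{if}-branch fires and $x_\text{cond} \leftarrow x_\ell$ is a valid non-\texttt{MASK} value.

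The more delicate case is when $\ell$ is a masked position whose decoding index $j$ satisfies $j \geq n$, so that $x_\ell = \texttt{MASK}$ and the \textbf{else}-branch executes $x_\text{cond} \leftarrow \tilde{x}_\ell = \tilde{x}_{\sigma(j)}$. I must certify that $\tilde{x}_{\sigma(j)}$ has already been drawn by the time iteration $i$ reads it. Here I invoke Equation~\ref{eqn:left_to_right_forcing}: from $\sigma(j) = \sigma(i) - 1 < \sigma(i)$ together with $i, j \geq m$, it follows that $j < i$. Combined with $j \geq n$, this places $j$ in $[n, i)$, which is precisely the range of indices handled earlier in the speculation loop, so $\tilde{x}_{\sigma(j)}$ is well-defined and non-\texttt{MASK}.

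The main obstacle is this last step: one has to appeal to the left-to-right decomposition protocol to conclude that the position immediately to the left of $\sigma(i)$, whenever it is still masked at this point of the algorithm, must correspond to a strictly smaller loop index $j < i$ within the same outer iteration and hence has already been speculated. A related subtlety worth flagging is that the parallel wording of the speculation loop implicitly needs an intra-batch sequential interpretation of the dependency $\tilde{x}_{\sigma(j)} \to \tilde{x}_{\sigma(i)}$; this is acceptable because the context n-gram lookup is cheap, but the proof should note it.
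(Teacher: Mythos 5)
Your proposal is correct and follows essentially the same route as the paper's proof: a case split on whether $x_{\sigma(i)-1}$ is \texttt{MASK}, with Equation \ref{eqn:left_to_right_forcing} used to conclude $j < i$ so the neighboring masked position was already speculated (the paper phrases this as strong induction on $i \geq m$, whereas you argue it directly via $j \in [n,i)$). Your extra care in separating prompt/previously-decoded positions from same-iteration speculated positions, and in flagging the sequential reading of the ``parallelized'' loop, only makes the same argument slightly more explicit.
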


\begin{proof}
We conduct strong induction on $i \geq m$, where $m$ is the given prompt length.

For a given value of $i$, we have two cases:
\begin{enumerate}
    \item $x_{\sigma(i) - 1} \neq \texttt{MASK}$. This case is trivial.
    \item $x_{\sigma(i) - 1} = \texttt{MASK}$. This reduces to showing that $\tilde{x}_{\sigma(i) - 1} \neq \texttt{MASK}$. Firstly, there exists some $j$ such that $\sigma(j) = \sigma(i) - 1$, which trivially means $\sigma(j) < \sigma(i)$. From Equation \ref{eqn:left_to_right_forcing}, this is equivalent to saying $j < i$. By the inductive hypothesis, $\tilde{x}_{\sigma(j)} \neq \texttt{MASK}$ should have been speculated for all $j < i$.
\end{enumerate}
\end{proof}

We note that context-based n-grams lack the guarantees of Lemma \ref{lem:first_token_accept}, so they could increase the total number of function evaluations (\asarm + n-gram draft) beyond the sequence length. However, n-grams are typically cheap to evaluate, which could make up for the increased function evaluations.

\subsection{Infilling Benchmark}

We run $5$ trials over the dataset of $1871$ stories, making for $9355$ samples per masking level (short or long) per model. 
The models compared are: GPT-2 \cite{radford2019language}, MDLM \cite{sahoo2024simple}, SEDD \cite{lou2023discrete}, DiffuGPT \cite{gong2024scaling}, XLNet-OTS (off-the-shelf) \cite{yang2019xlnet}, and XLNet-Finetuned (finetuned by us). We do not compare against models like LLaDA \cite{nie2025large} because they have access to significantly more training budget than us, not to mention their orders-of-magnitude larger model size.

For the XLNet models, we use \assd. We set $k=15$ for our speculative decoding. 
Without speculative decoding on XLNet, "Infill 1/5" requires $10.9 \pm 2.9$, and "Infill 3/5" requires $32.4 \pm 6.2$ evaluations.
For GPT, we use sequential decoding, since it cannot be its own speculator.
Following \cite{gong2024scaling}, we only give GPT the left conditioning, as it is not straightforward to give it rightward conditioning without instruction-tuning. 
This experiment can be run on a 16GB GPU.

\subsection{Code Generation}\label{subsec:code_gen}

We finetune XLNet on 14.7 billion tokens from Starcoder's Python data \cite{li2023starcoder}. This corresponds to $75,000$ training steps with a batch size of $384$ ($16$ per device, $4$ accumulation steps, $6$ GPUs) of $512$ tokens each. We started from learning rate $0$, warmed up for $20,000$ steps to learning rate $1.2 * 10^{-4}$, then scheduled to decay linearly for $63,333$ steps (which corresponds to $32 * 10^6$ total samples seen), although the run crashed slightly earlier than this. We start from a $5\%$ masking rate, then warm up over $20,000$ steps to a $15\%$ minimum masking rate, and a $99\%$ maximum masking rate, sampled uniformly.

Since the XLNet tokenizer does not support whitespaces, we replace whitespace tokens with special characters during training (\textbackslash n : <cls>, \textbackslash t : <sep>, "\_ \_" : <unk>, "\_ \_ \_" : <pad>), where \_ is the space. When generating code, we reverse the special character mapping and truncate the leading space from the generated lines, since the tokenizer by default inserts a leading space to each word.

We evalute on the HumanEval single-line infilling task on $1033$ Python test cases, following \cite{gong2024scaling}. We generated $5$ completions for each test case, for a sample size of $5165$. We evaluate with the pass@1 metric, \textit{i.e.}, we count the failure or success of each of the five completions for a test case (rather than taking the best out of the five completions).

For the baseline, we compare the publicly available DiffuLLaMA \cite{gong2024scaling} model. We find that the results from \cite{gong2024scaling} are actually under-reported. When manually inspecting the outputs, we found that there were often generations that were correct, except that the number of leading spaces was off by one. Our investigation suggests that the LLaMA 7B tokenizer also seems to have an issue with counting prefix spaces, etc. Since this is not a problem with model capacity, we relaxed the evaluation, and manually inserted the ground truth indentation (\textit{i.e.}, correct number of leading spaces) to DiffuLLaMA's outputs. After doing so, the pass@1 rate increased from about $16\%$ to $40\%$.

This experiment can be run on a 16GB GPU.

\section{Additional Results}\label{subsec:more_results}

\subsection{Any-Subset Speculative Decoding: Comparison of Finetuned and Off-the-Shelf}

\begin{table}[]
    \centering
    \begin{tabular}{l r r r r}
        \rowcolor{gray!35} \textbf{Sampler} & \textbf{Gen PPL} & \textbf{Entropy} & \textbf{NFEs} & \textbf{Time (s)} \\
        \toprule
        \textbf{\textit{Sequential}} & $59.08 \pm 5.61$ & $3.119 \pm 0.065$ & $486.0 \pm 0.0$ & $18.04 \pm 0.00$ \\
        \textbf{\textit{Speculative}} & $59.12 \pm 5.26$ & $3.206 \pm 0.064$ & $247.3 \pm 1.8$ & $9.36 \pm 0.07$ \\
        \midrule
        \textit{Difference} & $+0.08\%$ & $+2.78\%$ & $-49.12\%$ & $-48.09\%$\\
        \bottomrule
    \end{tabular}
    \caption{\textbf{Comparison of \assd (Algorithm \ref{alg:any_order_speculative}) and Sequential Decoding in Off-the-Shelf Model:} The entries show mean and standard error of generative perplexity (judge: GPT-2 Large), Shannon entropy, number of network function evaluations, and wall clock time. Metrics are calculated over $640$ decoded WikiText sequences of length $512$, where $95\%$ is randomly masked out. We set $k=5$ in speculative decoding.}
    \label{tab:more_speculative_decoding}
\end{table}

See Table \ref{tab:more_speculative_decoding}. This shows extended results from Table \ref{tab:speculative_decoding}, with the off-the-shelf model from Huggingface.

While the generative perplexity of the off-the-shelf model is much lower than the generative perplexity of our finetuned model, this comes at the cost of very low entropy. Indeed, when we manually inspected the outputs, we found that the off-the-shelf model generated highly repetitive, nonsensical sequences of a few common words (see Appendix \ref{sec:off_the_shelf_outputs}). On the other hand, the finetuned model generated sentences that were, for the most part, coherent both semantically and syntatically (see Appendix \ref{sec:sample_outputs}).


Additionally, the off-the-shelf model gains a larger boost in runtime from speculative decoding. Examining the outputted sequences, it again appears to be due to its highly repetitive output distributions, as these would be easier to speculate with a mean field model. 

\section{Ablation}\label{sec:ablations}

\subsection{Mask Decomposition Ablation}

\begin{figure}
    \centering
    \begin{subfigure}[]{0.49\textwidth}
        \includegraphics[width=\textwidth]{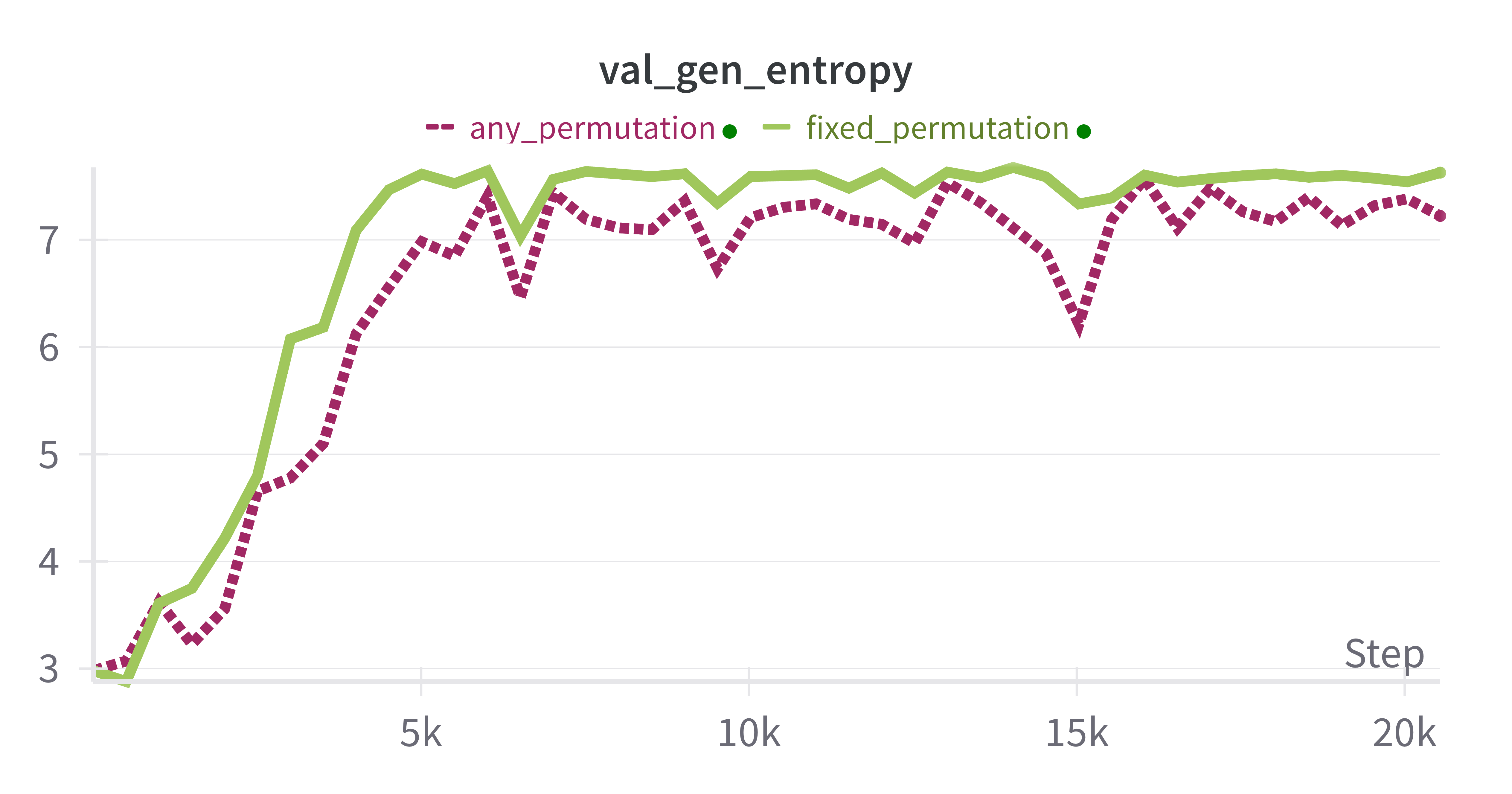}
        \caption{\textbf{Entropy} $(\uparrow)$}
    \end{subfigure}
    \begin{subfigure}[]{0.49\textwidth}
        \includegraphics[width=\textwidth]{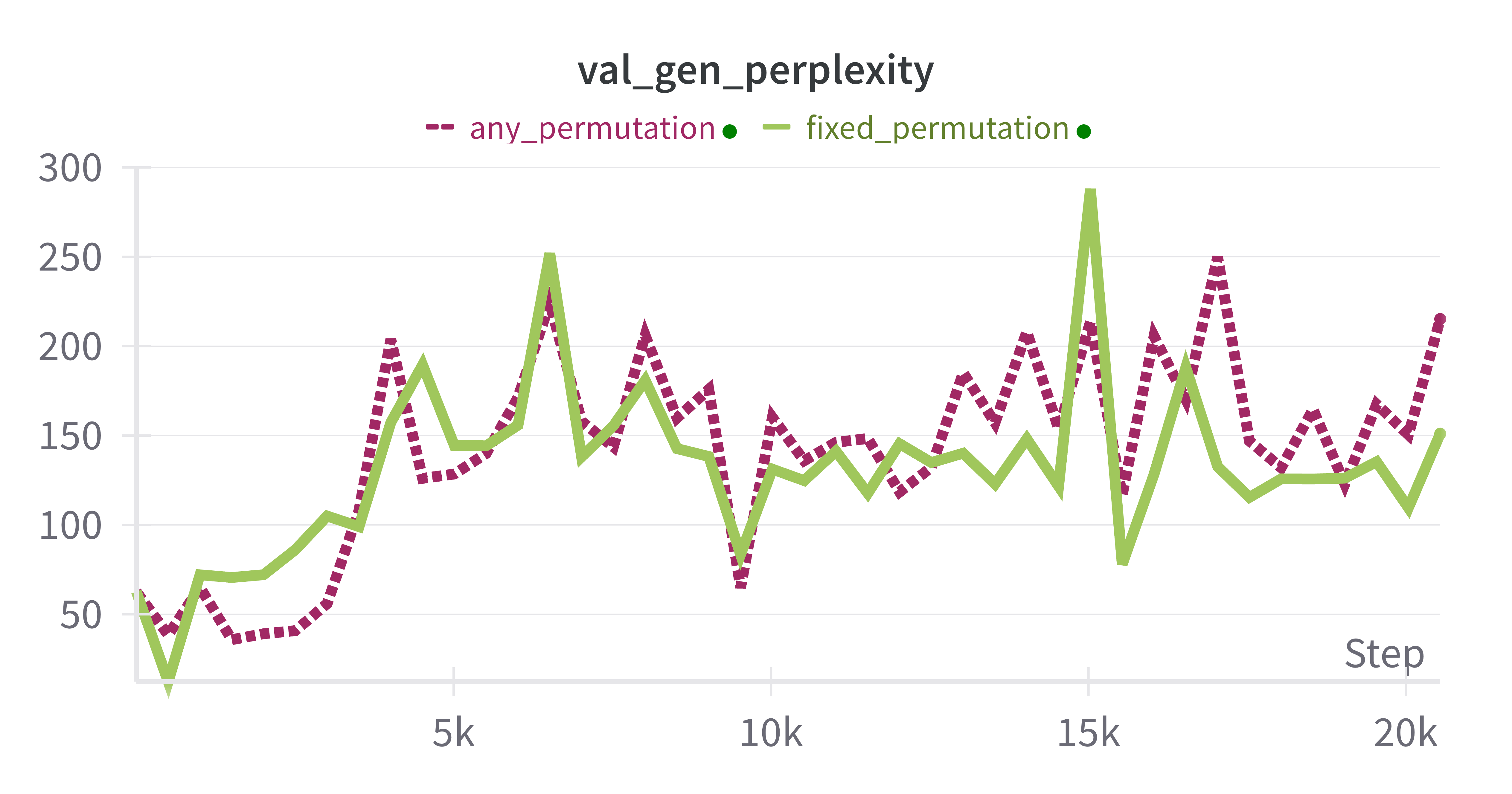}
        \caption{\textbf{Generative Perplexity} $(\downarrow)$}
    \end{subfigure}
    \caption{\textbf{Fixed (Recursive Binary Lattice) Versus Any Permutation Mask Decomposition:} Validation loop metrics on generated sequences from each training strategy. The curves shown are for models trained with an effective batch size of $96$ across four NVIDIA RTX A4000 devices. Each validation iteration has $36$ sequences of length $512$ tokens.}\label{fig:fixed_vs_any_order}
\end{figure}

Figure \ref{fig:fixed_vs_any_order} shows the ablation of mask decomposition protocol described in Section \ref{subsec:efficient_mask}. The entropy (generation diversity) is consistently better when training with the recursive binary mask decomposition protocol, while the generative perplexity is about the same. 

\subsection{Impact of Masking Distribution}

\begin{figure}
    \centering
    \begin{subfigure}[]{0.49\textwidth}
        \includegraphics[width=\textwidth]{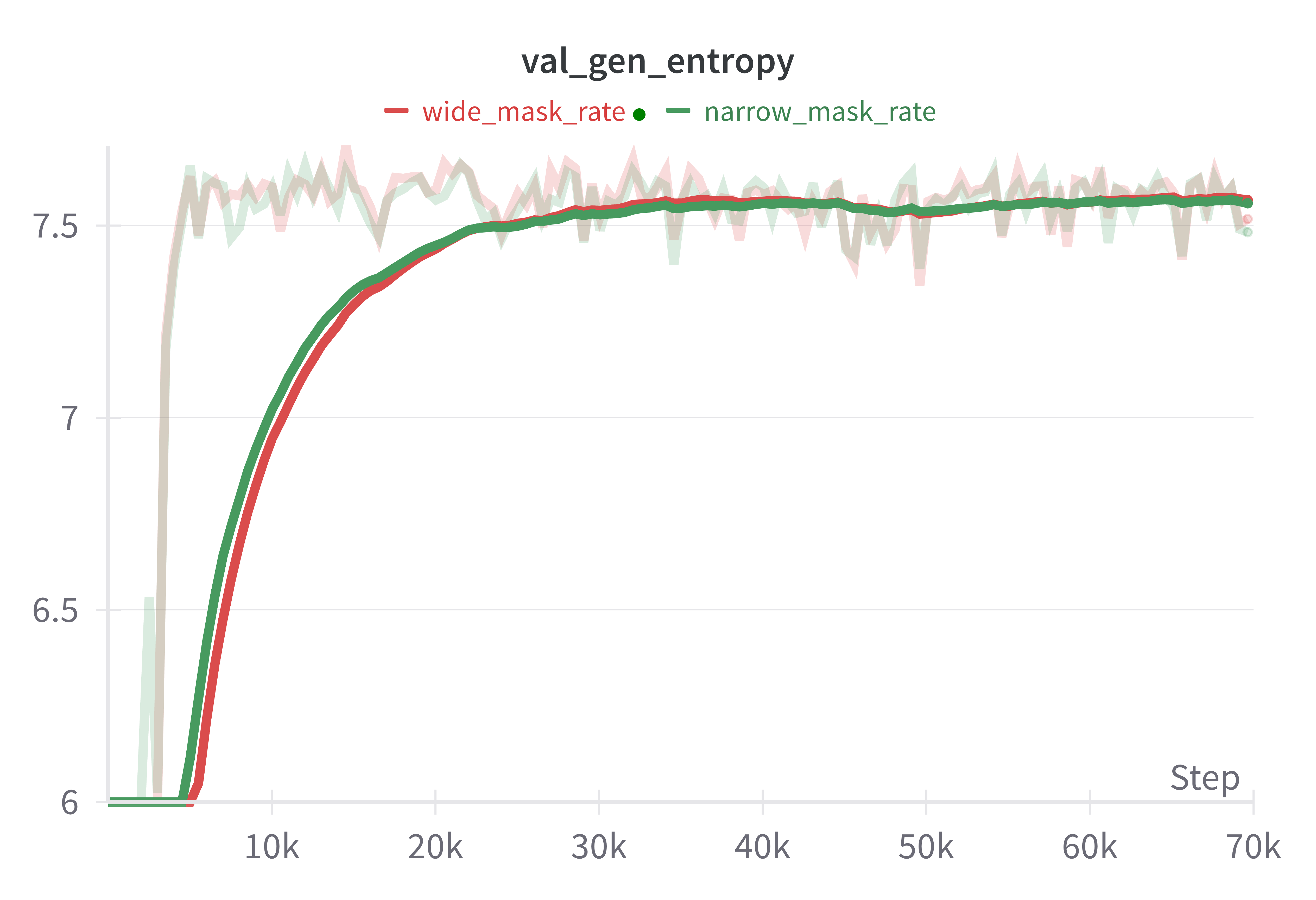}
        \caption{\textbf{Entropy} $(\uparrow)$}
    \end{subfigure}
    \begin{subfigure}[]{0.49\textwidth}
        \includegraphics[width=\textwidth]{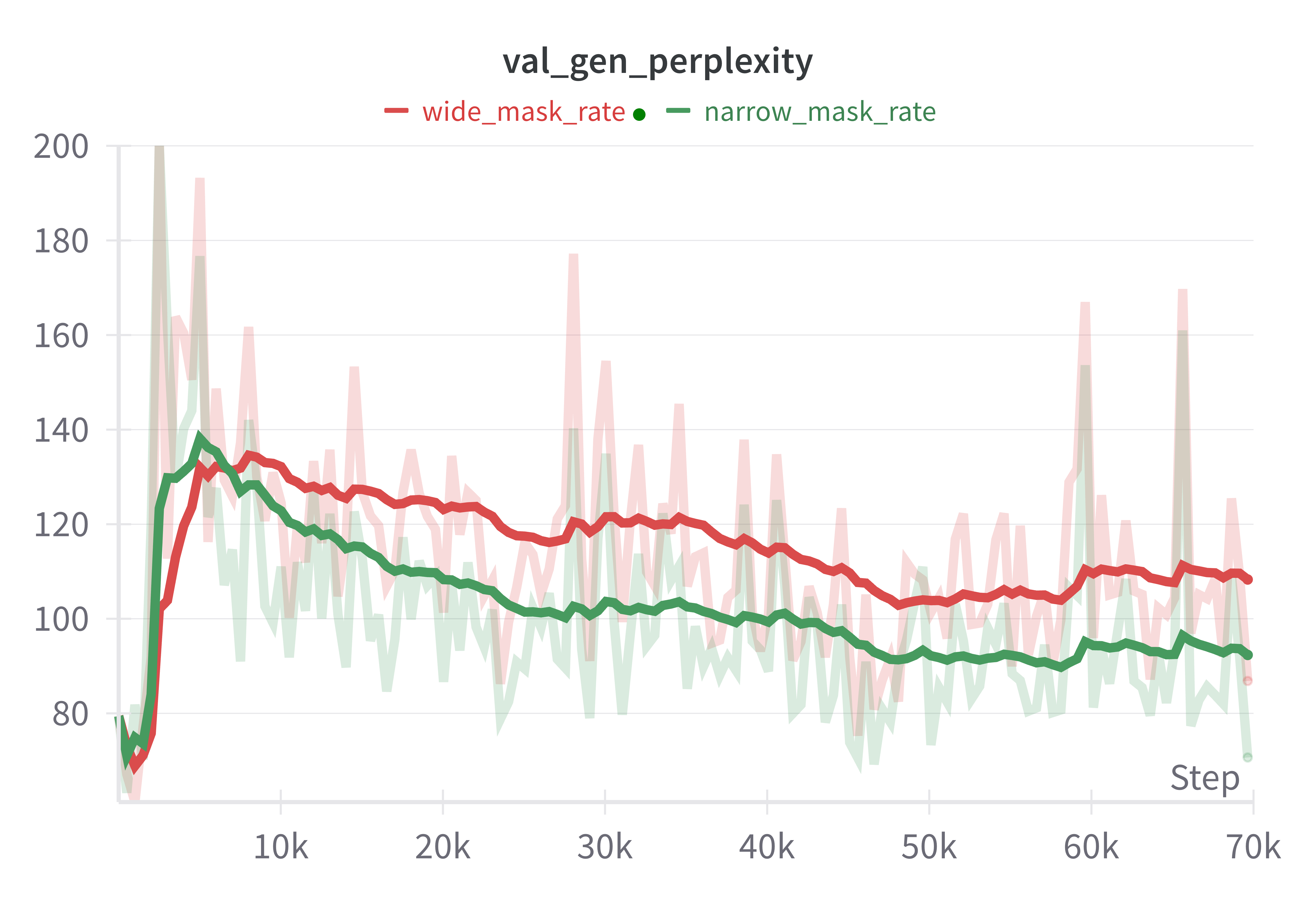}
        \caption{\textbf{Generative Perplexity} $(\downarrow)$}
    \end{subfigure}
    \caption{\textbf{Narrow ($\mathbf{1\%\rightarrow10\%}$) Versus Wide ($\mathbf{1\%\rightarrow85\%}$) Prompting Rates:} Validation loop metrics on generated sequences from each training strategy, as it relates to the distribution of prompt lengths in the train set. The curves shown are for models trained with an effective batch size of $320$ across five NVIDIA RTX 6000 Ada devices. Each validation iteration has $64$ sequences of length $512$ tokens from OpenWebText, where the task is to infill $95\%$ of the masked sequence given a $5\%$ prompt.}\label{fig:narrow_vs_wide}
\end{figure}

Figure \ref{fig:narrow_vs_wide} shows the effect of the realization of the distribution of prompt (\textit{i.e.}, masking) length from Section \ref{subsec:mask_distribution}. Since the validation task is to infill $95\%$ of a masked sequence, it is expected that training the model exclusively on shorter prompt lengths would be better than training the model on a mixture of long and short prompt lengths (which would dilute its capacity). Indeed, we see that, with respect to generative perplexity, the model trained with $m\sim\mathcal{U}[0.01, 0.10]$ outperforms the model trained with $m\sim\mathcal{U}[0.01, 0.85]$, where $m$ is the percentage of the source text that is given as prompt (\textit{i.e.}, \textit{un}masked). On entropy, the short prompt training strategy performs marginally better at low training steps, but the gap closes as training continues.


\section{Additional Related Works}\label{sec:more_related_works}

\subsection{Any-Order Autoregressive Models}

There are some works from the pre-transformer era \cite{vaswani2017attention} that deal with the problem of arbitrary density estimation, namely NADE \cite{uria2014deep} and MADE \cite{germain2015made}.

\subsection{Discrete Diffusion Models}

There is also work on improving sampling. EDLM \cite{xu2024energy} learns a partition function to allow parallel sampling, while ensuring that the generated tokens adhere to a desired joint distribution. DDPD \cite{liu2024think} learns a planner network to optimize the step size taken with uniform diffusion models. JYS \cite{park2024jump} also works on optimizing the step schedule, although their method involves an optimization problem over (part of) the training dataset. \cite{zhao2024informed} work on predictor-corrector methods and introduce the $k-$Gillespie sampling algorithm. SDTT \cite{deschenaux2024beyond} distilled discrete diffusion models to achieve comparable generative performance with fewer sampling steps. 
However, \cite{zheng2024masked}'s work revealed a numerical precision error with discrete diffusion sampling that leads to unintentional low-temperature sampling. 


\subsection{Speculative Decoding}

Simple draft models include context-based and model-based n-grams \cite{stewart2024n}. 
Lookahead Decoding also relies upon parallel generation of n-grams, and does not need to train an additional draft model \cite{fu2024break}.
Hydra \cite{ankner2024hydra} and Medusa \cite{cai2024medusa} augment the target language model with lightweight heads to quickly predict additional tokens. The advantage of this approach, like ours, is that at least two tokens are guaranteed to be accepted on each loop iteration: the first token comes from the base model; if the next token is rejected, it is resampled from a combination of the proposal and oracle distributions.
There are also works \cite{zhang2023draft} that, similar to ours, use the same model for drafting and verification; one such work is LayerSkip \cite{elhoushi2024layerskip}. This is known as self-speculative decoding. A difference is that these methods skip model layers, while we skip inputs. 

The only work that we know of that uses a speculative decoding-like algorithm for non-left-to-right models is $\sigma$-GPT \cite{pannatier2024sigma}. However, their algorithm actually violates Theorem \ref{lem:num_evals} and Theorem \ref{thm:correct_dist}, meaning that it is not theoretically guaranteed to produce tokens from the correct target distribution, and can slow down sampling. See Appendix \ref{sec:comparison_to_sigma_gpt} for details.

\section{Comparison to $\sigma$-GPT}\label{sec:comparison_to_sigma_gpt}

\begin{algorithm}
\caption{Token-based rejection sampling, reprinted from $\sigma$-GPT \cite{pannatier2024sigma}}\label{alg:sigma_gpt_rejection}
\SetAlgoLined
\LinesNumbered
\KwIn{$T$: minimum target length, $y$: any-order autoregressive model, $N_o$: number of orderings to sample, $\mathbb{X}$: prompt of length $t_0$}

$t \leftarrow t_0$ \\
\While{t < T}{
    In parallel, compute distribution conditioned on prompt $p(x_i | \mathbb{X}), \forall i \in t, \ldots, T$ \\
    In parallel, sample at every position $\tilde{x}_i \sim p(x_i | \mathbb{X}), \forall i \in t, \ldots, T$ \\
    Draw $N_o$ random order $\sigma$ and in parallel, compute all logits $q\left(x_i | \mathbb{X}, \tilde{x}_{\sigma(<i)}\right), \forall i \in t, \ldots, T$ \label{sigma:line:random_order} \\
    In parallel sample $T - t$ variables $u_i \sim \mathcal{U}[0, 1], \forall i \in t, \ldots, T$ from  a uniform distribution. \\
    In parallel, compute the acceptance decision $a_i = u_i < \text{min}\left(1, \frac{q\left(\tilde{x}_i | \mathbb{X}, \tilde{x}_{\sigma(<i)}\right)}{p(\tilde{x}_i | \mathbb{X})}\right)$ for every order. \label{sigma:line:accept_decision} \\
    Select the order that accepts the most tokens before seeing a first rejection. \label{sigma:line:select_best_order} \\
    Keep that order and add the $a$ accepted tokens before the first rejection to the prompt. \\
    Set $t = t + a$
}
\end{algorithm}

$\sigma$-GPT is a work that superficially seems to have a similar sampling algorithm as ours for any-order autoregressive models \cite{pannatier2024sigma}. However, as we will see, their algorithm (Algorithm \ref{alg:sigma_gpt_rejection}) actually has subtle yet critical mistakes that lead to violations of Theorem \ref{lem:num_evals} and Theorem \ref{thm:correct_dist}, removing the theoretical guarantees.

\subsection{Ambiguous Factorization Order and Joint Distribution}

Firstly, their algorithm does \textit{not} provably give the correct joint distribution, because they randomly sample multiple factorization orders (Lines \ref{sigma:line:random_order}, \ref{sigma:line:select_best_order}) at each draft-accept cycle, given a prompt. This would re-introduce the consistency problem in Equation \ref{eqn:inconsistent}, \textit{i.e.}, different orderings give different joint distributions. Thus, it is unclear what the true target distribution they are trying to match actually is, violating Theorem \ref{thm:correct_dist}. 

Another consequence of the multiple factorization orders sampled is that the number of NFEs could exceed the number of tokens eventually accepted, if the multi-order oracle evaluations all reject the speculated tokens. So, Theorem \ref{lem:num_evals} is violated, which means that this scheme could potentially slow down sampling.

\subsection{Lack of Resampling Step}

Furthermore, their algorithm also does not have a resampling step in case of rejection. This decreases the number of tokens each iteration is guaranteed to accept from two to one (see Theorem \ref{lem:num_evals}'s proof). This means that their algorithm is \textit{not} mathematically guaranteed to reduce the number of function evaluations, and can in theory increase it: in the case that only the first conditionally independent token from the draft is accepted, the function evaluation of the oracle did not lead to an extra token being accepted. This violates Theorem \ref{lem:num_evals}.

Furthermore, the lack of resampling potentially once again violates Theorem \ref{thm:correct_dist}'s guarantee that the returned distribution will match the joint. Using the notation and ideas from Appendix \ref{sec:modified_rejection_sampling}, they have the "accept" (first) term in Equation \ref{eqn:total_prob}, but they do not have the "reject" (second) term to balance the probability out. In fact, their Line \ref{sigma:line:accept_decision} cannot even be considered to be proper rejection sampling, because it lacks the proper normalization for the decision threshold \cite{leviathan2023fast}.

\subsection{Any-Order Speculative Decoding Addresses Problems}

In contrast, by enforcing \cite{shih2022training}'s recursive binary lattice mask decomposition, we ensure that given a prompt, there is \textit{only one} correct path to calculating the joint conditional probability of the missing tokens. This makes it clear in our Algorithm \ref{alg:any_order_speculative} what the target distribution actually is, and our output provably matches that distribution (Theorem \ref{thm:correct_dist}). Furthermore, we have resampling in Line \ref{line:resample}. Combined with the single-path evaluation, this mathematically guarantees that we never increase the number of function evaluations above the number of masked tokens (Theorem \ref{lem:num_evals}).

\section{Off-the-Shelf Model Outputs}\label{sec:off_the_shelf_outputs}


Text comes from the WikiText dataset \cite{wikitext_merity2016pointer}. This is from the default XLNet model provided on Huggingface (\textit{not} our finetuned), which was only trained to predict around $20\%$ of tokens.

\begin{tcolorbox}[title={Original Text}, colback=blue!10]
<sep><cls></s> = Robert Boulter =<sep><cls></s><sep><cls></s> Robert Boulter is an English film , television and theatre actor . He had a guest @-@ starring role on the television series The Bill in 2000 . This was followed by a starring role in the play Herons written by Simon Stephens , which was performed in 2001 at the Royal Court Theatre . He had a guest role in the television series Judge John Deed in 2002 . In 2004 Boulter landed a role as " Craig " in the episode " Teddy 's Story " of the television series The Long Firm ; he starred alongside actors Mark Strong and Derek Jacobi . He was cast in the 2005 theatre productions of the Philip Ridley play Mercury Fur , which was performed at the Drum Theatre in Plymouth and the Menier Chocolate Factory in London . He was directed by John Tiffany and starred alongside Ben Whishaw , Shane Zaza , Harry Kent , Fraser Ayres , Sophie Stanton and Dominic Hall .<sep><cls></s> In 2006 , Boulter starred alongside Whishaw in the play Citizenship written by Mark Ravenhill . He appeared on a 2006 episode of the television series , Doctors , followed by a role in the 2007 theatre production of How to Curse directed by Josie Rourke . How to Curse was performed at Bush Theatre in the London Borough of Hammersmith and Fulham . Boulter starred in two films in 2008 , Daylight Robbery by filmmaker Paris Leonti , and Donkey Punch directed by Olly Blackburn . In May 2008 , Boulter made a guest appearance on a two @-@ part episode arc of the television series Waking the Dead , followed by an appearance on the television series Survivors in November 2008 . He had a recurring role in ten episodes of the television series Casualty in 2010 , as " Kieron Fletcher " . Boulter starred in the 2011 film Mercenaries directed by Paris Leonti .<sep><cls></s><sep><cls></s> = = Career = =<sep><cls></s><sep><cls></s><sep><cls></s> = = = 2000 – 2005 = = =<sep><cls></s><sep><cls></s> In 2000 Boulter had a guest @-@ starring role on the television series The Bill ; he portrayed " Scott Parry " in the episode , " In Safe Hands " . Boulter starred as " Scott
\end{tcolorbox}


\begin{tcolorbox}[title={Prompt}, colback=yellow!10]
\_ \_ \_ \_ \_ \_ \_ \_ \_ \_ \_ \_ \_ \_ \_ \_ \_ \_ \_ \_ \_ \_ \_ \_ \_ \_ \_ \_ \_ \_ \_ \_ \_ \_ \_ \_ \_ \_ \_ \_  the television\_ \_ \_ \_ \_ \_ \_ \_ \_ \_ \_ \_ \_ \_  in\_ \_ \_ \_ \_ \_ \_ \_ \_ \_ ,\_ \_ \_ \_ \_ \_ \_ \_  Court\_ \_ \_ \_ \_ \_ \_ \_ \_ \_ \_ \_ \_ \_ \_ \_ \_ \_ \_ \_ \_ \_ \_ \_ \_ \_ \_ \_ \_ \_ \_ \_ \_ \_ \_ \_ \_ "\_ \_ \_ \_ \_ \_ \_ \_ \_ \_ \_ \_ \_ \_ \_ \_ \_ \_ \_ \_ \_ \_  and\_ \_ \_ \_ \_  He\_ \_ \_ \_  2005\_ \_ \_ \_ \_ \_ \_ \_ \_ \_ \_ \_ \_ \_ \_ \_ \_ \_ \_  Plymouth\_ \_ \_ \_ \_ \_ \_ \_ \_ \_ \_ \_ \_ \_ \_ \_ \_ \_ \_ \_ \_ \_ \_ \_ \_ \_ \_ \_ \_ \_ \_ \_ \_ \_ \_ \_ \_ \_ \_ \_  Stanton and\_ \_ \_ \_ \_ \_ \_ \_ \_ \_ \_ \_ \_  starred\_ \_ \_ \_ \_  play Citizens\_ \_ \_ \_ \_ \_ \_ \_ \_ \_ \_  a\_ \_ \_ \_ \_ \_  \_ \_ \_ \_ \_ \_ \_ \_ \_ \_ \_ \_ \_ \_ \_ \_ \_ \_ \_ \_  Josie\_ \_ \_ \_ \_ \_ \_ \_ \_ \_ \_ \_ \_ \_ \_ \_ \_ \_ \_ \_ \_ \_ \_ \_ .\_ \_ \_ \_ \_ \_ \_ \_ \_ \_ \_ \_ \_ \_ \_ \_ \_ \_ \_ \_ \_ \_ \_ key\_ \_ \_ \_ \_ \_ \_ \_ \_ \_ \_ \_ \_ \_ \_ \_ \_ \_ \_ \_ \_ \_ \_ \_ \_ \_ \_ \_ \_ \_ \_ \_ \_ \_ \_ \_ \_ \_ \_ \_ \_ \_ \_ \_ \_ \_ \_ \_ \_ \_ \_ \_ \_ \_ \_ \_ \_ \_ \_ \_ \_ \_ \_ \_ \_ \_ \_ \_ \_ \_ \_ \_ \_ \_ \_ \_ \_ \_ \_ \_ \_ \_ \_ \_ \_ \_ \_ \_ \_ \_ \_ \_  Merc\_ \_ \_ \_ \_ \_ \_ \_ \_ \_ \_ \_ \_ \_ \_ \_ \_ \_  =\_ \_ \_ \_ \_ \_ \_ \_ \_ \_ \_  =\_ \_ \_ \_ \_ \_ \_ \_ \_ \_ \_ \_ \_ \_ \_ \_ \_ \_ \_ \_ \_ \_ \_ \_ \_ \_ \_ \_  the\_ \_ \_ \_ \_ \_ \_  portrayed\_ \_ \_ \_ \_ \_ \_ \_ \_ \_ \_ \_ \_ \_ \_ \_ \_ \_ \_ \_ \_ \_ \_ \_  starred\_ \_ \_ \_ \_ 
\end{tcolorbox}


\begin{tcolorbox}[title={Off-the-Shelf, Sequential Decoding}, colback=red!20]
300 300 300 300<eop> Ethnic or Ethnic Ethnically Ly Americans or Others or Others Beyond Ly Ly Ly Ly Ly Ly Ly Ly Ly Ly Ly Ly Ly Ly Ly Ly Ly Ly Ly Ly Ly Ly Ly the television Ly Ly Ly Ly Ly Ly Ly Ly Ly Ly Ly Ly Ly Ly in Ly Ly Ly Ly Ly Ly Ly North ), Ly Ly Ly Ly Ly Ly Ly Ly Court Ly Ly Ly Ly Ly Ly Ly Ly Ly Ly Ly Ly Ly Ly Ly Ly Ly Ly Ly Ly Ly Ly Ly Ly Ly Ly Ly Ly Ly Ly Ly Ly Ly Ly Ly Ly Ly" Ly Ly Ly Ly Ly Ly Ly Ly Ly Ly Ly Ly Ly Ly Ly Ly Ly Ly Ly Ly Ly Ly and Ly Ly Ly Ly Ly He Ly Ly Ly Ly 2005 Ly Ly Ly Ly Ly Ly Ly Ly Ly Ly Ly Ly Ly Ly Ly Ly Ly Ly Ly Plymouth Ly Ly Ly Ly Ly Ly Ly Ly Ly Ly Ly Ly Ly Ly Ly Ly Ly Ly Ly Ly Ly Ly Ly Ly Ly Ly Ly Ly Ly Ly Ly Ly Ly Ly Ly Ly Ly Ly Ly Ly Stanton and Ly Ly Ly Ly Ly Ly Ly Ly Ly Ly Ly Ly Ly starred Ly Ly Ly Ly Ly play Citizens Ly Ly Ly Ly Ly Ly Ly Ly Ly Ly Ly a Ly Ly Ly Ly Ly Ly  Ly Ly Ly Ly Ly Ly Ly Ly Ly Ly Ly Ly Ly Ly Ly Ly Ly Ly Ly Ly Josie Ly Ly Ly Ly Ly Ly Ly Ly Ly Ly Ly Ly Ly Ly Ly Ly Ly Ly Ly Ly Ly Ly Ly Ly. Ly Ly Ly Ly Ly Ly Ly Ly Ly Ly Ly Ly Ly Ly Ly Ly Ly Ly Ly Ly Ly Ly Lykey Ly Ly Ly Ly Ly Ly Ly Ly Ly Ly Ly Ly Ly Ly Ly Ly Ly Ly Ly Ly Ly Ly Ly Ly Ly Ly Ly Ly Ly Ly Ly Ly Ly Ly Ly Ly Ly Ly Ly Ly Ly Ly Ly Ly Ly Ly Ly Ly Ly Ly Ly Ly Ly Ly Ly Ly Ly Ly Ly Ly Ly Ly Ly Ly Ly Ly Ly Ly Ly Ly Ly Ly Ly Ly Ly Ly Ly Ly Ly Ly Ly Ly Ly Ly Ly Ly Ly Ly Ly Ly Ly Ly Merc Ly Ly Ly Ly Ly Ly Ly Ly Ly Ly Ly Ly Ly Ly Ly Ly Ly Ly = Ly Ly Ly Ly Ly Ly Ly Ly Ly Ly Ly = Ly Ly Ly Ly Ly Ly Ly Ly Ly Ly Ly Ly Ly Ly Ly Ly Ly Ly Ly Ly Ly Ly Ly Ly Ly Ly Ly Ly the Ly Ly Ly Ly Ly Ly Ly portrayed Ly Ly Ly Ly Ly Ly Ly Ly Ly Ly Ly Ly Ly Ly Ly Ly Ly Ly Ly Ly Ly; Ly Ly starred Ly Ly Ly Ly Ly
\end{tcolorbox}

\begin{tcolorbox}[title={Off-the-Shelf, Speculative Decoding (Algorithm \ref{alg:any_order_speculative})}, colback=red!10]
series and led starred starred starred starred starred starred starred starred starred starred starred starred starred starred starred starred starred starred starred starred starred starred starred starred starred starred starred starred starred starred starred starred starred starred starred and in the television major starred starred starred starred starred starred starred starred starred starred when the on in became being the title characters into scene as filming the, and too happen which came. The Screen Court Movie""""1996)( were cast they are booked cast to be cast moondicly on set as the movie upon movies too news faint as of taking pres "". They cast to appear on set as of other celebrities the her shee sheeee heee and and is casting as the Hee Hee she 2005 """"Uh Uh Uh Uh Uh Uh Uh, Plymouth listed on set the first here Hee hee she is casting as they are casting to appear as the scheduled here Hee hee he is cast as previously its scheduled her to appear as Stanton and into the Las Vegas during draft her seen her can be cast or starred as the backup for the play Citizens Las Vegas are casting ankle to appear assed or a the next scheduled Las Vegas Las (""""" shortly before after the next scheduled Las Vegas Las Vegas Las "" Josie Las Vegas Las ""jinohlah,x Vegas Las Vegas Las Vegas Las Vegas Las Vegas Las Vegas "." Las Vegas Las Vegas Las Vegas Las Vegas Las Vegas Las Vegas Las Vegas Las Vegas Las Vegas Las Vegas Las Vegaskey Las Vegas Las Vegas Las Vegas Las Vegas Las Vegas Las Vegas Las Vegas Las Vegas Las Vegas Las Vegas Las Vegas Las Vegas Las Vegas Las Vegas Las Vegas Las Vegas Las Vegas Las Vegas Las Vegas Las Vegas Las Vegas Las Vegas Las Vegas Las Vegas Las Vegas Las Vegas Las Vegas Las Vegas Las Vegas Las Vegas Las Vegas Las Vegas Las Vegas Las Vegas Las Vegas Las Vegas Las Vegas Las Vegas Las Vegas Las Vegas Las Vegas Las Vegas Las Vegas Las Las Vegas Las Vegas Las Merc Las Vegas Las Vegas Las Vegas Las Vegas Las Vegas Las Vegas Las Vegas Las Vegas Las Vegas = Las Vegas Las Vegas Las Vegas Las Vegas Las Vegas Las = Las Vegas Las Vegas Las Vegas Las Vegas Las Vegas Las Vegas Las Vegas Las Vegas Las Vegas Las Vegas Las Vegas Las Vegas Las Vegas Las Vegas the Las Vegas Las Vegas Las Vegas Las portrayed Las Vegas Las Vegas Las Vegas Las Vegas Las Vegas Las Vegas Las Vegas Las Vegas Las Vegas Las Vegas Las Vegas Las Vegas starred Las Vegas Las Vegas Las
\end{tcolorbox}

\section{Finetuned Model Outputs}\label{sec:sample_outputs}

Text comes from the WikiText dataset \cite{wikitext_merity2016pointer}. Samples correspond to Table \ref{tab:speculative_decoding}, with our model trained from $90\%\rightarrow99\%$ masking rate, as described in Appendix \ref{subsec:hyperparameters}.

\begin{tcolorbox}[title={Original Text}, colback=blue!10]
<cls></s> AFI 's 10 Top 10 - \# 6 Sports Film<sep><cls></s><sep><cls></s> = = Legacy = =<sep><cls></s><sep><cls></s> In the decades since its release , The Hustler has cemented its reputation as a classic . Roger Ebert , echoing earlier praise for the performances , direction , and cinematography and adding laurels for editor Dede Allen , cites the film as " one of those films where scenes have such psychic weight that they grow in our memories . " He further cites Fast Eddie Felson as one of " only a handful of movie characters so real that the audience refers to them as touchstones . " TV Guide calls the film a " dark stunner " offering " a grim world whose only bright spot is the top of the pool table , yet [ with ] characters [ who ] maintain a shabby nobility and grace . " The four leads are again lavishly praised for their performances and the film is summed up as " not to be missed . "<sep><cls></s> Paul Newman reprised his role as Fast Eddie Felson in the 1986 film The Color of Money , for which he won the Academy Award for Best Actor in a Leading Role . A number of observers and critics have suggested that this Oscar was in belated recognition for his performance in The Hustler . In 1997 , the Library of Congress selected The Hustler for preservation in the United States National Film Registry as " culturally , historically , or aesthetically significant . " Carroll and Rossen 's screenplay was selected by the Writers Guild of America in 2006 as the 96th best motion picture screenplay of all time . In June 2008 , AFI released its " Ten top Ten " — the best ten films in ten " classic " American film genres — after polling over 1 @,@ 500 people from the creative community . The Hustler was acknowledged as the sixth best film in the sports genre .<sep><cls></s> The Hustler is credited with sparking a resurgence in the popularity of pool in the United States , which had been on the decline for decades . The film also brought recognition to Willie Mosconi , who , despite having won multiple world championships , was virtually unknown to the general public . Perhaps the greatest beneficiary of the film 's popularity was a
\end{tcolorbox}

\begin{tcolorbox}[title={Prompt}, colback=yellow!10]
\_ \_ \_ \_ \_ \_ \_ \_ \_ \_ \_ \_ \_ \_ \_ \_ \_ \_ \_ \_ <sep>\_ \_ \_ \_ \_ \_ \_ \_ \_ \_ \_ \_ \_ \_ \_ \_ \_ \_ \_ \_ ,\_ \_ \_ \_ \_ \_ \_ \_ \_ \_ \_ \_ \_ \_ \_ \_ \_ \_ \_ \_ \_ \_ \_ \_ \_ \_  direction\_ \_ \_ \_ \_ \_ \_ \_ \_ \_ \_ \_ \_  editor\_ \_ \_ \_ \_ \_ \_ \_ \_ \_ \_ \_ \_ \_ \_ \_ \_ \_ \_ \_ \_ \_ \_ \_ \_ \_  our\_ \_ \_ \_ \_ \_ \_ \_ \_ \_ \_ \_ \_ \_ \_ \_ \_ \_ \_ \_ \_ \_ \_ \_ \_ \_ \_ \_ \_ \_ \_ \_ \_ \_ \_ \_ \_ \_  \_ \_ \_ \_ \_ \_ \_ \_ \_ \_ \_ \_ \_ \_ \_ \_ \_ \_ \_ \_ \_ \_ \_ \_ \_ \_ \_ \_ \_ \_  the\_ \_ \_ \_ \_ \_ \_ \_ ]\_ \_ \_ \_ \_ \_ \_ \_ \_ \_ \_  and\_ \_ \_ \_ \_ \_ \_ \_ \_ \_  lavish\_ \_ \_ \_ \_  and\_  film\_ \_ \_ \_ \_ \_  not\_ \_ \_ \_ \_ \_ \_ \_ \_ \_ \_ \_ \_ \_ \_  role\_ \_ \_ \_ \_ \_ \_ \_ \_ \_ \_ \_ \_ \_ \_ \_ \_ \_ \_ \_ \_ \_ \_ \_ \_ \_ \_ \_ \_  \_ \_ \_ \_ \_ \_ \_ \_ \_ \_ \_ \_ \_ \_ \_ \_ \_  for\_ \_ \_ \_ \_ \_ \_ \_ \_ \_ \_ \_ \_  Library\_ \_ \_ \_ \_ \_ \_ \_ \_ \_ \_ \_ \_ \_ \_ \_ \_ \_ \_  \_ \_ \_ \_ \_ \_ \_ \_ \_ \_ \_ \_ \_ \_ \_ \_ \_ \_ \_ \_ \_ \_ \_ \_ \_ \_ \_ \_ \_ \_ \_ \_ \_ \_ \_ \_ \_ \_ \_ \_ \_ \_ \_ \_ \_ \_ \_ \_ \_ FI\_ \_ \_ \_ \_ \_  Ten\_ \_ \_ \_ \_ \_ \_ \_ \_  ten\_ \_ \_ \_ \_ \_ \_ \_ \_ \_ \_ \_ \_ \_ \_ \_ \_ \_ \_ \_ \_ \_ \_ \_ \_ \_ \_ \_ \_ \_ \_ \_ \_ \_ \_ \_ \_ \_ \_ \_ \_ \_ \_ \_ \_ \_  Hus\_ \_ \_ \_ \_ \_ \_ \_ \_ \_ \_ \_ \_ \_ \_ \_ \_ \_ \_ \_ \_ \_ \_ \_ \_ \_ \_ \_ \_ \_  also\_ \_ \_  Willie\_ \_ \_ \_ \_ \_ \_ \_ \_ \_ \_ \_ \_ \_ \_ \_ \_ \_ \_ \_ \_ \_ \_ \_ \_ \_ \_ \_ \_ \_ \_ \_  '\_ \_ \_ \_ 
\end{tcolorbox}

\begin{tcolorbox}[title={Finetuned, Sequential Decoding}, colback=green!20]
available to all user accounts. 3. Feel free to throw us a few questions and development suggestions.<sep><cls></s> GENEVA, Switzerland – The paper will not draft further stories, and we will, after all, ignore the party of journalistic freedom, which includes the right not to evaluate claims against anyone, except a dissenting direction from the scientific community. I assure you that we will continue the editor-in-chief role as a quarterly magazine where there will be no exemptions for writers of a particular viewpoint. This will lead our competition to host an active website, WMSU, which will host an automated competition about WMSU as well as to develop accredited methods of talking, researching and writing editorial articles in Swedish ("HF") and English (Roban Xismen in German). History [ edit ] Flawed motives for the art of the book have been debated much.[2] Until 2001, in ten of our 10 articles on 2001 and '02, we have used scientific forms, lavish rigor and style, and fake film. Since then, it has not happened to us. ARandom House, in cooperation with Wikipedia, consists in role-playing games and fantasy series literary novels with a unique approach to artistic media, including "Jeressande" by Audiom et les Hamerswein. See also [ edit ] Table of contents of articles for 2001-11 The intensive research effort of the Fabian Society and the Library and Museum of Great Britain on the RMA 1000 series includes more than nine dazzling articles, 50 "classics" and other pieces that garnered a mention from the Swedish press, an astute academic, sergeant d'Hoskin's Service, an item of art on the rise for WMSU in the 2002 World FI World Games.[3] Ten or more previous articles in February contributed to a ten-years archive by the L.B. Law Foundation of knowledge-urgent WB02 publications data. In 2005, Wercd et al. took the first step toward finishing "Moswart Huslas", which was published in the 1976 edition. In 2007, Munsry et al. repeated "White People's Rescue" also published events by Willie Miller and other Al Jazeera fresh and original voices. In a 2003 paper in the sociology of journalism, David Blabas reporteds, "The so called 'white men's
\end{tcolorbox}

\begin{tcolorbox}[title={Finetuned, Speculative Decoding (Algorithm \ref{alg:any_order_speculative})}, colback=green!10]
it learned to leave the visit.” Read the folksy post—in full here...<sep><cls></s> CSI: Drayton, Part I Sir Ken \& Honi Pratt Expedition Expeditions, Ltd Research \& Development Canada U.S. History and Culture 1,200 Games History Remfacile Rerail will provide nine series of digital direction, redesign and multi-distribution services, leading original biographer and editor David Tatnow appears in the final stages of return service. “It’s almost as fast as walking, getting the basics off our shelves,” Chief Creative Officer Tony Paschus said. “We envy Historic Stadium, Ottawa and the story we can tell with access to our digital resource. The behind-the-scenes humblings cost the taxpayers more than \$14 million last year and were the costliest part of CSI’s entire qualification process isn’t the only aspect of our financial problem [yet]. Players will have access to the most advanced titles, and collectible characters of any franchise. But they will be lavished with additional digital content and new film releases, in things that will not cut through Rui’s new, toned down, strong-arm role—for example, when, in those spacesuits and used as rival generals, she can have dramatic conversations with everyone on her team. (Each character continues to be introduced during missions.) UK flags are symbolic of support for this transition from digital play to entertainment. Credit: Beaujorie Library Release date: 2015-01-25 Format: Remastered digital Download framerate: 57 kbps (a range of 1.2 to 54 kbps) Thumbnail up to 1920x1080 (16 in., 5 notches at 40.1°C) Planting Function: BFI 16 00 | Twelve Tens of catchests Producers: tenthreekorea-reports.gl.ca SUBSCRIBE TO Northeastern University’s CSI: Drayton News \& Games Tour: 51 Place (including Canada), Alberta (including Canada) Huston University University College of Art and Design MacAskill University of Waterloo (Note: Gamers 3 and under are welcome to sign up) See also: All About Willie Lewis<sep><cls></s> Posing for AC/DC's album's tour in October, D'Angelo Brakes of British Columbia proceeded with a classic 'DC' mixta
\end{tcolorbox}


\end{document}